\long\def\BEGINCOMMENT #1\ENDCOMMENT{\relax}
\newcommand{\maps}\longrightarrow
\newcommand{\cmaps}\Longrightarrow
\newcommand{\dest}{\operatorname{{\bf D}}}
\title{Conjunctions of Among Constraints}
\author{V\'{\i}ctor Dalmau}
\institute{Dept. of Information and Communication Technologies, Universitat Pompeu Fabra}
\begin{document}

\maketitle

\newtheorem{myclaim}{Claim}
\newtheorem{observation}{Observation}

\renewcommand{\max}{max}
\renewcommand{\min}{min}

\newcommand{\fpt}{\operatorname{FPT}}
\newcommand{\wone}{\operatorname{W[1]}}
\newcommand{\plhom}{\textsc{p-LHom}}
\newcommand{\gaifman}{\operatorname{Gaifman}}
\newcommand{\boolean}{B}
\newcommand{\oneinthree}{\textsc{One-In-Three Sat}}

\newcommand{\filter}{\textsc{DomFilter}}
\newcommand{\Constraint}[1]{\textsc{#1}}
\newcommand{\alldifferent}{\Constraint{AllDiff}}
\newcommand{\globalcardinality}{\Constraint{GCC}}
\newcommand{\among}{\Constraint{Among}}
\newcommand{\sequence}{\Constraint{Sequence}}
\newcommand{\TFO}{\Constraint{TFO}}
\newcommand{\threeFO}{\Constraint{3FO}}
\newcommand{\ordereddistribute}{\Constraint{OrderedDistribute}}
\newcommand{\symgcc}{\Constraint{Symmetric-GCC}}
\newcommand{\globalsequencing}{\Constraint{GlobalSequencing}}
\newcommand{\cardinalitymatrix}{\Constraint{CardinalityMatrix}}
\newcommand{\cac}{\Constraint{CAC}}

\newcommand{\range}{\operatorname{Range}}
\newcommand{\scope}{\operatorname{Scope}}


\begin{abstract}
 Many existing global constraints can be encoded as a conjunction
of among constraints. An among constraint holds if the number of the variables in its scope
whose value belongs to a prespecified set, which we call its range, is within some given bounds. It is known that domain
filtering algorithms can benefit from reasoning about the interaction of among constraints
so that values can be filtered out taking into consideration several among constraints simultaneously. 
The present paper embarks into a systematic investigation on the circumstances under which it
is possible to obtain efficient and complete domain filtering algorithms for conjunctions of among 
constraints. We start by observing that restrictions on both the scope and the range of the 
among constraints are necessary to obtain meaningful results. Then, we derive 
a domain flow-based filtering algorithm and present several applications. In particular,  it is
shown that the algorithm unifies and generalizes several previous existing results.

\end{abstract}

\section{Introduction}

Global constraints play a major role in constraint programming. Very informally, a global constraint
is a constraint, or perhaps more precisely, a family of constraints, which is versatile enough to be able to express restrictions 
that are encountered often in practice.  For example, one of the most widely used global constraints 
is the 'All different' constraint, \alldifferent$(S)$ where $S=\{x_1,\dots,x_n\}$ is a set of variables, which 
specifies that the values assigned to the variables in $S$ must be all pairwise different. This sort of
restriction arises naturally in many areas, such as for example scheduling problems, where the variables $x_1,\dots,x_n$
could represent $n$ activities that must be assigned different times of a common resource.

Besides is usefulness in simplifying the modeling or programming task, 
global constraints also improve greatly the efficiently of propagation-search based solvers. This type of solver
performs a tree search that constructs partial assignments and enforces some sort of propagation or local consistency that prunes the
space search. Different forms of consistency, including (singleton) bounds consistency, (singleton, generalized) arc-consistency, path consistency and
many others, can be used in the propagation phase. One of the most commonly 
used forms of local consistency is {\em domain consistency}, also called generalized arc-consistency. A
domain consistency algorithm keeps, for every variable $v$, a list, $L(v)$, of feasible values, which is updated, by removing
a value $d$ from it, when some constraint in the problem guarantees that $v$ cannot take value $d$ in any solution.
One of the key reasons of the success of global constraints is that they enable the use of efficient filtering algorithms specifically tailored for 
them. 

Several constraints studied in the literature including \alldifferent{}\cite{Lauriere78}, \globalcardinality \cite{Oplobedu78}, 
\symgcc{} \cite{Kocjan04}, \sequence{} \cite{Beldiceanu94}, \globalsequencing{} \cite{Regin97}, \ordereddistribute{} \cite{Petit11}, and \cardinalitymatrix{} 
\cite{Regin04}
can be decomposed as the conjunction of a simpler family of constraints, called among constraints \cite{Beldiceanu94}. An among constraint has the form 
$\among(S,R,\min,\max)$ where $S$ is again a set of variables called the {\em scope}, 
$R$ is a subset of the possible values, called the {\em range}, and $\min,\max$ are integers
This constraint 
specifies that the number of variables in $S$ that take a value in $R$ 
must be in the range $\{\min,\dots,\max\}$.  For example, the constraint \alldifferent$(S)$ can be expressed as the
conjunction of constraints $\among(S,\{d_1\},0,1),\dots,\among(S,\{d_k\},0,1)$ where $d_1,\dots,d_k$ are the set 
of all feasible values for the variables in $S$.

Besides encoding more complex global cardinality constraints, conjunctions of among constraints, (\cac{}), appear in many problems, such 
as Sudoku or latin squares. In consequence, \cac s have been previously studied \cite{Chabert12,Regin05,Hoeve09}, specially the particular case of 
conjunctions of \alldifferent{} constraints \cite{Appa04,Appa05,Bessiere10,Fellows13,Kutz08,Lardeux08,Magos12}.
Although deciding the satisfiability of an arbitrary conjunction of among constraints is NP-complete \cite{Regin05} this 
body of work shows that sometimes there are benefits in reasoning about the interaction between the among constraints.
Hence, it is important to understand under which circumstances among constraints can be combined in order to endow CSP solvers
with the ability to propagate taking into consideration several among constraints simultaneously. 
The aim of the present paper is to contribute to this line of research. To this end we first observe
that restrictions on {\em both} the scope and the range of the among constraints are necessary to obtain meaningful results.
Then we embark in a systematic study of which such restrictions guarantee efficient propagation algorithms. In particular,
we introduce a general condition such that every \cac{} satisfying it 
admits an efficient and complete domain filtering algorithm. This condition
basically expresses that the matrix of a system of linear equations encoding the \cac{} instance belongs to a particular class
of totally unimodular matrices known as network matrices. This allows to reformulate the domain
filtering problem in terms of flows in a network graph and apply the methodology derived by R\'egin \cite{Regin94,Regin96}.
The algorithm thus obtained, although simple, unifies and generalizes existing domain filtering algorithms for several global 
constraints,
including \alldifferent{}, \globalcardinality{}, \sequence{}, \symgcc{}, \ordereddistribute{} as well as for
other problems expressed as conjunctions of among constraints  in \cite{RazgonOP07,Regin05}.
A nice feature of our approach is that it 
abstracts out the construction of the network flow problem, so that when exploring a new \cac{}
one might leave out the usually messy details of the design of the network graph and reason purely in combinatorial terms. 

Several filtering methods have been obtained by decomposing a global constraint
into a combination of among constraints. For example the first polynomial-time filtering
algorithm for the \sequence{} constraint \cite{Hoeve09} is obtained explicitely in this way.
However, there have been very few attempts to determine sistematically which particular conjunctions
of among constraints allow efficient filtering algorithms. The seminal paper in this direction is \cite{Regin96} which identifies
several combinations of among and \globalcardinality{} constraints that admit a complete and efficient domain filtering algorithm (see Section \ref{sec:applications} for more details). Our approach is more general as it subsumes the tractable cases introduced in \cite{Regin96}. Another closely related work is \cite{RazgonOP07}
where two tractable combinations of boolean \cac{}s, called \TFO{} and \threeFO{}, are identified. The approach in \cite{RazgonOP07} differs from ours
in two aspects: it deals with optimization problem and also considers restricions on the $\min$ and $\max$ parameters of the among constraints while we only consider
restriction on the scope and range. A different family of \cac{}s has been investigated in \cite{Chabert12} although the work in \cite{Chabert12} focuses in bound consistency instead of domain consistency. 

Other approaches to the design of filtering algorithms for combinations (but not necessarily conjunctions) of global (but not necessarily among) constraints are described in \cite{BacchusW05,BessiereHHKW08,BessiereHHKW09}. The method introduced in \cite{BacchusW05} deals with logical combinations of some primitive constraints but differs substantially from ours in the sense that it cannot capture a single among constraint. The work reported in \cite{BessiereHHKW08,BessiereHHKW09} does not guarantee tractability. 

Several proofs are omiitted due to space restrictions. They can be found in the Appendix.


\section{Preliminaries}\
A 
{\em conjunction of among constraints}, (\cac{}) is a tuple $(V,D,L,{\mathcal C})$ where $V$ is a finite set whose elements are called {\em variables},
$D$ is a finite set called {\em domain}, $L:V\rightarrow 2^{D}$ is a mapping that sends every variable $v$ to a subset of $D$, which we call its {\em list}, and
${\mathcal C}$ is a finite set of {\em constraints} where a constraint is an expression of the form
$\among(S,R,\min,\max)$ where $S\subseteq V$ is called the {\em scope} of the constraint,
$R\subseteq D$ is called {\em range} of the constraint, and $\min,\max$ are integers satisfying $0\leq \min\leq\max\leq |S|$.

A {\em solution} of $(V,D,L,{\mathcal C})$ is a mapping $s:V\rightarrow D$ such that $s(v)\in L(v)$ for every variable $v\in V$ and
$\min\leq|\{v\in S \mid s(v)\in R\}|\leq\max$ for every constraint $\among(S,R,\min,\max)$ in ${\mathcal C}$.

\begin{example}(\globalcardinality{} and \alldifferent{} constraints)
\label{ex:gcc}
The global cardinality constraint\footnote{We want to stress here that a global constraint is not a single
constraint but, in fact, a family of them.}, \globalcardinality{} \cite{Oplobedu78} corresponds to 
instances $(V,D,L,{\mathcal C})$ where all the constraints have the form
$\among(V,\{d\},\min,\max)$ with $d\in D$.
The \alldifferent{} constraint is the particular case obtained when, additionally, $\min=0$ and $\max=1$. 
\end{example}

Let $I=(V,D,L,{\mathcal C})$ be a \cac{}. We say that a value $d\in D$ is {\em supported} for a variable $v\in V$ if
there is a solution $s$ of $I$ with $s(v)=d$. In this paper we focus in the following computational problem, which we will call {\em domain filtering}: given a 
\cac{}, compute the set of all the non supported values for each of its variables.

This definition is motivated by the following scenario: think of $(V,D,L,{\mathcal C})$ 
as defining a constraint which is part of a CSP instance that is being solved by a search-propagation algorithm that
enforces domain consistency. Assume that at any stage of the execution of the algorithm, $L$ encodes the actual feasible values for each variable
in $V$. Then, the domain filtering problem is basically the task of identifying all
the values that need to be pruned by considering the constraint encoded by $(V,D,L,{\mathcal C})$.



\section{Network hypergraphs}

An {\em hypergraph} $H$ is a tuple, $(V(H),E(H))$, where $V(H)$ is a finite set whose elements are called {\em nodes} and $E(H)$
is set whose elements are subsets of $V(H)$, called hyperedges. 
An hypergraph is {\em totally unimodular} if its incidence matrix $M$ is totally unimodular, that is, if every square submatrix of $M$ has determinant $0$, $+1$, or $-1$. 
In this paper we are concerned with a subset of totally unimodular hypergraphs called network hypergraphs. In order to 
define network hypergraph we need to introduce a few definitions.

An {\em oriented tree} $T$ is any directed tree obtained by orienting the edges of an undirected tree. A {\em path} $p$ in
$T$ is any sequence $x_1,e_1,x_2,\dots,e_{n-1},x_n$ where $x_1,\dots,x_n$ are different vertices of $T$,
$e_1,\dots,e_{n-1}$ are edges in $T$ and for every $1\leq i<n$, either $e_i=(x_i,x_{i+1})$ 
or $e_i=(x_{i+1},x_i)$. The {\em polarity} of an edge $e\in E(T)$ wrt. $p$ is defined to
be $+1$ (or positive) if $e=(x_i,x_{i+1})$ for some $1\leq i<n$, $-1$ (or negative) if
$e=(x_{i+1},x_i)$ for some $1\leq i< n$, and $0$ if $e$ does not appear in $p$.
A path $p$ has positive (resp. negative) polarity if
all its edges have positive (resp. negative) polarity. Paths with positive polarity are also called directed paths.
Since an oriented tree does not contain symmetric edges, we might represent a path by giving only its sequence
of nodes $x_1,\dots,x_n$.


We say that an oriented tree $T$ {\em defines} an hypergraph $H$ if we can associate to
every hyperedge $h\in E(H)$ an edge $e_h\in E(T)$ and to every node $v\in V(H)$ a directed path $p_v$ in $T$ such that
for every $v\in V(H)$ and $h\in E(V)$, $v\in h$ if and only if $e_h$ belongs to $p_v$. We say that an $H$ is a {\em network}
hypergraph if there is an oriented tree that defines it.

\begin{example}
\label{ex:hypergraph}
The hypergraph $H$ with variable-set $\{v_1,\dots,v_6\}$ and hyperedge-set $\{h_1,\dots,h_5\}$
given in Figure \ref{fig:1a} is a network hypergraph as it is defined by tree $T$ given in Figure \ref{fig:1b} where we have indicated, using labels on the edges, 
the edge in $T$ associated to every hyperedge in $H$. We  associate 
to every variable $v_i$, $1\leq i\leq 6$ the directed path $s_{(i-1\mod 2)},r,t_{(i-1\mod 3)}$ in $T$. It can be readily checked
that under this assignment $T$ defines $H$.

\captionsetup[subfigure]{labelformat=empty}

\begin{figure}
\centering
	\begin{subfigure}[t]{1.3in}
		\centering
\tikzstyle{vertex}=[circle,fill=black!25,minimum size=12pt,inner sep=0pt]
\tikzstyle{edge} = [opacity=.5,fill opacity=.5,line cap=round, line join=round, line width=1pt]
\pgfdeclarelayer{background}
\pgfsetlayers{background,main}
\begin{tikzpicture}[scale=0.65,auto]
\foreach \pos/\name in {{(0,0)/{v_1}},{(1,0)/{v_3}},{(2,0)/{v_5}},{(0,1)/{v_4}},{(1,1)/{v_6}},{(2,1)/{v_2}}}
        \node[vertex] (\name) at \pos {$\name$};

   \draw (1,0) ellipse (2cm and 0.4cm);
    \draw (1,1) ellipse (2cm and 0.4cm);
   \draw (0,0.5) ellipse (0.4cm and 1.4cm);
   \draw (1,0.5) ellipse (0.4cm and 1.4cm);
   \draw (2,0.5) ellipse (0.4cm and 1.4cm);
\node at (-1.4,0) {$h_4$};
\node at (-1.4,1) {$h_5$};
\node at (0,2.2) {$h_1$};
\node at (1,2.2) {$h_3$};
\node at (2,2.2) {$h_2$}; 
\end{tikzpicture}
		\caption{Fig.\,{\ref{fig:1a}}:\,\,{Hypergraph} $H$}\label{fig:1a}		
	\end{subfigure}
	\quad \quad \quad \quad \quad \quad \quad
	\begin{subfigure}[t]{1.4in}
		\centering
\tikzstyle{vertex}=[circle,fill=black!25,minimum size=12pt,inner sep=0pt]
\tikzstyle{edge} = [draw,thick,-]
\tikzstyle{diedge} = [draw,thick,->]
\tikzstyle{weight} = [font=\small]
\begin{tikzpicture}[scale=0.65,auto]
\foreach \pos/\name in {{(0,0)/r},{(-2,1)/{s_0}},{(-2,-1)/{s_1}},{(2,1.5)/{t_0}},{(2,0)/{t_1}},{(2,-1.5)/{t_2}}}
        \node[vertex] (\name) at \pos {$\name$};
    \path[diedge] ({s_0}) -- node[pos=0.2,style=sloped] {$h_4$} (r);
    \path[diedge] ({s_1}) -- node[pos=0.8,style=sloped] {$h_5$} (r);
    \path[diedge] (r) -- node[pos=0.7,style=sloped] {$h_1$} ({t_0});
    \path[diedge] (r) -- node[pos=0.6,style=sloped] {$h_2$} ({t_1});
    \path[diedge] (r) -- node[pos=0.3,style=sloped] {$h_3$} ({t_2});
\end{tikzpicture}
		\caption{Fig.\,{\ref{fig:1b}}:\,\,{Oriented tree} $T$}\label{fig:1b}
	\end{subfigure}
\end{figure}

\end{example}

Sometimes, it will be convenient to assume that the tree $T$ defining $H$ is minimal in the sense
that no tree with fewer nodes defines $H$. Minimal trees have the nice property that every edge $e$
in $T$ is associated with some hyperedge of $H$. Indeed, assume that some edge $e=(x,y)$ 
is not associated to any hyperedge in $H$, then one could find an smaller tree $T$ defining
$H$ by {\em contracting} edge $e$, that is, by merging $x$ and $y$ into a new node $z$ that has as in-neighbours
the union of all in-neighbours of $x$ and $y$ and, as out-neighbours, the union of all out-neighbours of $x$ and $y$.

Since the vast majority of the trees defined in this paper will be oriented we shall usually
drop 'oriented'. So, unless, otherwise explicitly stated, a tree is always an oriented tree. Finally, we note that
one can decide whether a given hypergraph is a network hypergraph in time $O(e^3v^2)$
where $e$ is the number of hyperedges and $v$ is the number of nodes  (see chapter 20 in \cite{Schrijver98} for example).

\section{Restricting only the scope or the range}
It has been shown by R\'egin \cite{Regin05} that the domain filtering problem for \cac{s} is NP-hard. Still, efficient algorithms are known for some particular cases. It seems natural to start by asking which tractable subcases of the problem can be explained by considering {\em only} the scopes of the constraints.
This question has a close similarity to the study of the so-called structural restrictions
of the CSP (see, for example \cite{Gottlob00} for a survey) and, not surprisingly, it can be solved by applying results developed there. Indeed, it follows easily from a result of F\"arnquivst and Jonsson \cite{Farnqvist07} that, modulo some mild technical assumptions, if one allows arbitrary ranges in constraints, then the domain filtering problem is 
solvable in polynomial time if and only if the hypergraph of the scopes of the constraints has bounded tree-width (see Appendix A 
for precise statement and the proof). This result, although delineates exactly the border between tractability and intractability, turns
out to be not very useful in explaining the tractability of global constraints. This is due to the fact that global cardinality constraints 
defined by  conjunctions of among constraints usually have constraints with large scopes and the cardinality of the scope in a constraint  is a lower bound
on the tree-width of its scope hypergraph. 

One can also turn the attention to the range of constraints and inquiry whether there are tractable subcases of the problem that can be explained {\em only} 
by the range of the constraints. Here, again the response is not too useful. Indeed, it is very easy to show (see again 
Appendix A) 
that as soon
as we allow some non-trivial range $R$ (that is some range different than the empty set and than the whole domain) and arbitrary scopes in the
among constraints, then the domain filtering problem becomes NP-complete. 

In view of this state of affairs it is meaningful to consider families of conjunctions of among constraints that are obtained by 
restricting {\em simultaneously} the scope and the range of the constraints occurring in them. This is done in the next section.

\section{A flow-based algorithm}
\label{sec:alg}
Let $I=(V,D,L,{\mathcal C})$ be conjunction of among constraints. We will deal first with the case
in which $D$ is a boolean, say $D=\{0,1\}$. Hence, we
can assume that every constraint $\among(S,R,\min,\max)$ in ${\mathcal C}$ satisfies $R=\{1\}$ since, if
$R=\{0\}$ it can be reformulated as $\among(S,\{1\},|S|-\max,|S|-\min)$. We also assume that
$L(v)=\{0,1\}$ for every $v\in V$ since if $L(v)\neq\{0,1\}$ we could obtain easily an equivalent instance
without variable $v$.  

It is easy
to construct a system of linear equations whose feasible integer solutions encode the solutions of $I$. 
Let $v_1,\dots,v_n$ be the
variables of $I$ and let $C_j=(S_j,\{1\},\min_j,\max_j),j=1,\dots,m$,
be its constraints. The system
has variables 
$x_i (1\leq i\leq n)$, $y_j (1\leq j\leq m)$ and the following equations:
\begin{align*}
 & y_j+\sum_{v_i\in S_j} x_i=\max_j  &  j=1,\dots,m \label{eq1} \\
 & 0\leq y_j\leq \max_j-\min_j &  j=1,\dots,m  \\
 & 0\leq x_i\leq 1   &  i=1,\dots,n  
\end{align*}
which we express in matrix form as 
\begin{align*}
& Mz=a & \\
& 0\leq z\leq c
\end{align*}
with $z^T=(x_1,\dots,x_n,y_1,\dots,y_m)$ (see Example \ref{ex:I}).

If $M$ is totally unimodular then one can perform domain filtering in polynomial time. Indeed,
for every $v_i\in V$ and $d\in L(v)$, we might decide whether $d$ is a supported value for $v$ as follows:
add equation $x_i=d$ to the system 
and decide whether there exists a feasible solution of its 
linear relaxation using a LP solver. It follows from
total unimodularity (see Theorem 19.1 in \cite{Schrijver98} for example) that such a feasible solution exists if and only if $d$ is a support for $v$. 

However, this approach implies invoking $O(n)$ times a LP solver, which might be too expensive to be practical, since, in addition,
a propagation-based algorithm might call a domain filtering algorithm many times during its execution. To overcome this difficulty we shall 
require further conditions on the matrix $M$. To this end, we define the hypergraph
associated to instance $I$ to be the hypergraph $H$ with $V(H)=V$ and $E(H)=\{S_j \mid 1\leq j\leq m\}$.

Now, assume that $H$ is a network hypergraph
defined by a tree $T$. In this case, one can use specific and more efficient methods like the network simplex algorithm (see for example \cite{Ahuja93}) instead of a general purpose LP solver. However,
it is still possible to do better (and avoid the $O(n)$ calls to the network simplex algorithm) by transforming it into a maximum flow problem. This idea has been used in 
\cite{Maher08} to obtain a domain filtering algorithm for the \sequence{} constraint. 
More precisely, \cite{Maher08} deals with the particular case of network matrices defined by a directed path.
Our approach draws upon \cite{Maher08} and generalizes it to network matrices defined by arbitrary trees. This is done as follows.

Let $P$ be the incidence matrix of $T$. That is, let $t_1,\dots,t_{m+1}$ be an arbitrary ordering of nodes in $T$ and define $P$ to be the 
$((m+1)\times m)$-matrix where $P_{i,j}$ is $+1$ if edge $e_{S_j}$ starts at $t_i$, $-1$ if
$e_{S_j}$ ends at $t_i$, and $0$ otherwise.

Let $r$ be a $m$-ary (column) vector and let $p$ be a path in $T$. We say that $r$ is the {\em indicator vector} of $p$
if for every $j=1,\dots,m$, $r_j$ is the polarity of $e_{S_j}$ wrt. $p$. The next observation follows directly from the definitions.

\begin{observation}
\label{obs:1}
Let $p$ be a path in $T$ and let $r$ be its indicator vector. Then, the $i$th entry, $(Pr)_i$, of $Pr$, is
$+1$ if $t_i$ is the first node in $p$, $-1$ if $t_i$ is the last node in $p$, and $0$ otherwise.
\end{observation}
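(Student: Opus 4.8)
The plan is to fix an arbitrary node $t_i$ and compute the single entry $(Pr)_i = \sum_{j=1}^m P_{i,j} r_j$ directly from the definitions, arguing by cases on how the path $p$ meets $t_i$. The first reduction is to identify which terms of the sum can be nonzero: by definition $P_{i,j}=0$ unless the edge $e_{S_j}$ is incident to $t_i$, and $r_j=0$ unless $e_{S_j}$ lies on $p$. Hence the only surviving terms come from edges that are \emph{simultaneously} incident to $t_i$ and traversed by $p$. Since every edge on $p$ joins two consecutive, hence on-path, nodes, a node $t_i$ that does not lie on $p$ has no incident on-path edge, so $(Pr)_i=0$ at once, settling the ``otherwise'' case.

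The heart of the argument is the observation that the product $P_{i,j} r_j$ for an on-path edge incident to $t_i$ depends only on whether $p$ traverses that edge \emph{into} or \emph{out of} $t_i$, and not on the orientation the edge happens to carry in $T$. Concretely, suppose $e_{S_j}$ is the edge by which $p$ leaves $t_i$ (so $p$ contains $\dots, t_i, e_{S_j}, b, \dots$). If $e_{S_j}=(t_i,b)$ in $T$ then $P_{i,j}=+1$ and its polarity is $r_j=+1$, giving product $+1$; if instead $e_{S_j}=(b,t_i)$ then $P_{i,j}=-1$ and $r_j=-1$, again giving product $+1$. A symmetric check shows that whenever $e_{S_j}$ is the edge by which $p$ enters $t_i$, the product $P_{i,j} r_j$ equals $-1$ in both possible orientations. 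This cancellation is exactly what makes the orientation of edges within $T$ immaterial to the final value.

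With this in hand the case analysis finishes itself. If $t_i$ is an interior node of $p$ there are precisely two incident on-path edges, one entering and one leaving, contributing $-1$ and $+1$ respectively, so $(Pr)_i=0$. If $t_i$ is the first node of $p$ the only incident on-path edge is the one by which $p$ leaves $t_i$, contributing $+1$, so $(Pr)_i=+1$; dually, if $t_i$ is the last node the unique incident on-path edge enters $t_i$ and $(Pr)_i=-1$. These four cases exhaust the possibilities and match the claimed values.

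I do not expect any genuine obstacle: the statement is a direct unwinding of the definitions of incidence matrix and of polarity, and the only point requiring a little care is the sign bookkeeping of the middle paragraph, where one must verify that the two possible $T$-orientations of an edge yield the same product. An equivalent and perhaps more conceptual route is to read $r$ as a unit flow routed along $p$ from its first to its last node and $(Pr)_i$ as the net flow out of $t_i$; flow conservation at interior nodes together with the source/sink behaviour at the endpoints then gives the three values immediately. The elementary case analysis above is, however, self-contained and suffices.
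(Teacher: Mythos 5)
Your proof is correct: the paper offers no argument for this observation beyond asserting that it ``follows directly from the definitions,'' and your case analysis (reducing to edges both incident to $t_i$ and on $p$, then checking that $P_{i,j}r_j$ is $+1$ or $-1$ according to whether $p$ leaves or enters $t_i$, independently of the edge's orientation in $T$) is precisely that direct verification, including the flow-conservation reading the construction is implicitly built on.
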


The next two lemmas follow directly from the previous observation.

\begin{lemma}
$P$ has full rank.
\end{lemma}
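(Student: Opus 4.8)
Since $T$ is a tree on $m+1$ nodes it has exactly $m$ edges, so $P$ is an $((m+1)\times m)$ matrix and ``full rank'' here means full column rank, i.e.\ rank $m$. Equivalently, I must show that the only vector $r$ with $Pr=0$ is $r=0$, since the $m$ columns of $P$ are linearly independent precisely when the kernel of the associated map $\mathbb{R}^m\to\mathbb{R}^{m+1}$ is trivial.

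The plan is to argue by induction on the number $m$ of edges, exploiting the fact that every nontrivial tree has a leaf. For $m=0$ the statement is vacuous; alternatively one may take the single-edge tree, whose matrix is the single column $(+1,-1)^T$, as the base case. For the inductive step, let $t_i$ be a leaf of $T$ and let $e_{S_k}$ be the unique edge of $T$ incident to it. By the definition of $P$, row $i$ contains exactly one nonzero entry, namely $P_{i,k}\in\{+1,-1\}$, in column $k$. Hence the $i$th coordinate of the equation $Pr=0$ reads $P_{i,k}\,r_k=0$, which forces $r_k=0$.

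Having pinned down $r_k=0$, I would delete the node $t_i$ and the edge $e_{S_k}$ from $T$ to obtain an oriented tree $T'$ on $m$ nodes with $m-1$ edges, whose incidence matrix $P'$ is exactly $P$ with row $i$ and column $k$ removed. The one point needing care is that removing column $k$ from the system does not disturb the remaining equations: column $k$ has its only other nonzero entry in the row of the second endpoint of $e_{S_k}$, but since $r_k=0$ that term contributes nothing, so the vector $r'$ obtained from $r$ by dropping coordinate $k$ satisfies $P'r'=0$. By the induction hypothesis $P'$ has full column rank, whence $r'=0$, and together with $r_k=0$ this gives $r=0$.

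The argument is essentially the classical computation of the rank of the vertex--edge incidence matrix of a connected digraph; the only bookkeeping worth verifying is that the reduced matrix $P'$ really is the incidence matrix of the subtree $T'$ (which holds because deleting a leaf together with its incident edge from an oriented tree again yields an oriented tree), and that the kernel relation descends to $T'$ as described above. I do not expect any step to present a genuine difficulty.
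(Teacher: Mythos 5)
Your proof is correct: the leaf-peeling induction showing that $Pr=0$ forces $r=0$ is the classical computation of the rank of the vertex--edge incidence matrix of a connected digraph, and every step you outline goes through — in particular the reduced matrix really is the incidence matrix of the smaller oriented tree, and the kernel relation descends because the deleted column's only other nonzero entry is multiplied by $r_k=0$. The paper takes a genuinely different, non-inductive route: it deletes the row corresponding to $t_{m+1}$ to obtain a square matrix $P'$ and exhibits an explicit matrix $Q$ whose $i$th column is the indicator vector of the unique path in $T$ from $t_i$ to $t_{m+1}$; Observation 1 (which computes $Pr$ for any path indicator vector $r$) then yields $P'Q=I$ in one line, so $P'$ is invertible and $P$ has rank $m$. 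The paper's argument is shorter in context because Observation 1 is already established and it keeps the path-indicator machinery — the conceptual engine of the whole section, later reused to show that $N=PM$ is a network matrix — front and center; your argument is more elementary and self-contained, requiring nothing beyond the definition of $P$ and the existence of a leaf in every nontrivial tree.
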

\begin{proof}
Let $P'$ be the $(m\times m)$ matrix obtained by removing the last row (corresponding to vertex $t_{m+1}$) and 
consider the $(m\times m)$-matrix $Q$ such that for every $i=1,\dots,m$, the $i$th column of $Q$, which we shall denote as $Q_{*,i}$, is the indicator vector of the  unique path in $T$ starting at $t_i$ and entding at $t_{m+1}$. It follows from Observation \ref{obs:1} that $P'Q$ is the identity matrix. 
\end{proof}

Then, since $P$ has full rank we can obtain an equivalent system $PMz=Pa$ by multiplying both sides of
$Mz=a$ by $P$. Let $N=PM$ and $b=Pa$ (see Example \ref{ex:P}).


\begin{lemma}
In every column of $N$ one entry is $+1$, one entry is $-1$, and all the other entries are $0$.
\end{lemma}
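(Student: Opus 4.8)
The plan is to compute each column of $N = PM$ separately, distinguishing the two kinds of columns of $M$: those indexed by a variable $x_i$ and those indexed by a slack variable $y_j$. In both cases I will show that the resulting column of $N$ has the claimed shape, and the whole argument reduces to an application of Observation~\ref{obs:1} together with the definition of $P$.

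First, consider a column indexed by $y_j$. By construction of the linear system, the $y_j$-column of $M$ is the standard basis vector $e_j$, since the coefficient of $y_j$ is $1$ in the $j$-th equation and $0$ in all the others. Hence the corresponding column of $N = PM$ is simply $P e_j = P_{*,j}$, the $j$-th column of $P$. By the definition of the incidence matrix $P$, this column carries a single $+1$ at the start node of $e_{S_j}$ and a single $-1$ at its end node, with all other entries $0$, which is exactly what we want.

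The key case is a column indexed by a variable $x_i$. Here the crucial point is that this column of $M$ is precisely the indicator vector of the directed path $p_{v_i}$ that $T$ associates to $v_i$. Indeed, the $j$-th entry of the $x_i$-column of $M$ equals $1$ exactly when $v_i \in S_j$, and since $T$ defines $H$ this happens exactly when $e_{S_j}$ lies on $p_{v_i}$; as $p_{v_i}$ is a directed path all of its edges have polarity $+1$, so this entry coincides with the polarity of $e_{S_j}$ with respect to $p_{v_i}$, i.e. with the $j$-th entry of the indicator vector of $p_{v_i}$. Applying Observation~\ref{obs:1} to the path $p_{v_i}$ then shows that the $x_i$-column of $N$ has a single $+1$ at the first node of $p_{v_i}$, a single $-1$ at its last node, and zeros elsewhere.

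The only point that needs a word of care, and the closest thing to an obstacle, is that the two distinguished entries be genuinely distinct, i.e. that $p_{v_i}$ is a nontrivial path whose first and last nodes differ. This holds provided each variable $v_i$ occurs in at least one constraint, so that $p_{v_i}$ contains at least one edge; I would record this as a harmless normalization, since a variable appearing in no constraint is unconstrained and can be discarded in a preprocessing step. Once the identification of the $x_i$-column of $M$ with an indicator vector is in place, no further step presents any difficulty and the statement follows directly.
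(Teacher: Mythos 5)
Your proof is correct and follows essentially the same route as the paper: both arguments reduce to showing that each column of $M$ is the indicator vector of a directed path in $T$ (the path $p_{v_i}$ for an $x_i$-column, the single edge $e_{S_j}$ for a $y_j$-column) and then invoking Observation~\ref{obs:1}. Your extra remark about discarding variables that occur in no constraint is a reasonable normalization that the paper leaves implicit.
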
 
\begin{proof}
It is only necessary to show that every column, $M_{*,k}$, $k=1,\dots,m+n$ of $M$ is the indicator vector of some directed path in $T$. 
If the variable corresponding to the $k$-column is $x_i$ for some $1\leq i\leq n$ then by construction $M_{*,k}$
is the indicator column of the path associated to $v_i$ in $T$. Otherwise, if the variable corresponding to column $k$ is
$y_j$ for some $1\leq j\leq m$ then $M_{*,k}$ is the indicator vector of the directed path 
containing only edge $e_{S_j}$.
\end{proof}
Hence, matrix $N$ is the incidence matrix of a directed graph $G$.
Note that, by definition, $G$
contains an edge $e_k, k=1,\dots,m+n$ for each variable $z_k$ and a node $u_j, j=1,\dots,m+1$ for each row in $N$ (that is, for every
node in $T$). Define
the capacity of every edge $e_k$ to be $c_k$. Then, feasible solutions of the system correspond precisely to
flows where every node $u_j$ has a supply/demand specified by $b_j$ (more precisely, node $u_j$ has a demand of $b_j$ units if $b_j>0$ and a
suply of $-b_j$ units if $b_j<0$). It is well know that this
problem can be reduced to the (standard) maximum flow problem by adding new source and sink nodes $s, t$ and edges
from $s$ to $u_j$ with capacity $b_j$ whenever $b_j>0$ and from $u_j$ to $t$ with capacity $-b_j$ whenever
$b_j<0$.

\begin{example}
\label{ex:P}\label{ex:I} 
Let $I$ be a boolean instance with variables $\{v_1,\dots,v_6\}$ and constraints:
$C_1=\among(\{v_1,v_4\},\{1\},0,1)$, 
$C_2=\among(\{v_2,v_5\},\{1\},0,1)$, \\
$C_3=\among(\{v_3,v_6\},\{1\},0,1)$, 
$C_4=\among(\{v_1,v_3,v_5\},\{1\},1,1)$, \\
$C_5=\among(\{v_2,v_4,v_6\},\{1\},1,1)$. \\

The specific values for $M$, $a$ and $c$ of the ILP formulation of $I$ are:
{\scriptsize
$$\begin{blockarray}{ccccccccccccccc}
& & & & y_1 & y_2 & y_3  & x_1 & x_2 & x_3 & x_4 & x_5 & x_6\\
\begin{block}{cccc(ccccccccc)c(c)}
   & & C_1 & &  1 & 0 & 0  & 1 & 0 & 0 & 1 & 0 & 0 & & 1 \\
   & & C_2 & &  0 & 1 & 0  & 0 & 1 & 0 & 0 & 1 & 0 & & 1 \\  
M= & & C_3 & &  0 & 0 & 1  & 0 & 0 & 1 & 0 & 0 & 1 & \quad \quad \quad a= \phantom{a} & 1 \\
   & & C_4 & &  0 & 0 & 0  & 1 & 0 & 1 & 0 & 1 & 0 & & 1 \\
   & & C_5 & &  0 & 0 & 0  & 0 & 1 & 0 & 1 & 0 & 1 & & 1 \\
\end{block}
\\
\begin{block}{cccc(ccccccccc)cc}
c^T= & & & &  1 & 1 & 1  & 1 & 1 & 1 & 1 & 1 & 1 &  & \\
\end{block}
\end{blockarray}
$$
}

Note that since $\min_i=\max_i$ for $i=4,5$ we did not need
to add the slack variables $y_4$ and $y_5$.
The hypergraph of this instance 
 is precisely the hypergraph $H$ in Example \ref{ex:hypergraph}. In particular, $h_i$ is the hypererdge corresponding to
the scope of constraint $C_i$ for $i=1,\dots,5$. The matrix $P$ obtained from the tree $T$ defining $H$ is:

{\scriptsize
$$\begin{blockarray}{ccccccc}
& & h_1 & h_2 & h_3 & h_4 & h_5 \\
\begin{block}{cc(ccccc)}
t_0 & \phantom{a} & -1  &  0  &  0  &  0  &  0  \\
t_1 & & 0  &  -1  &  0   &  0 &  0   \\
t_2 & & 0  &  0  &  -1   &  0  &  0  \\  
s_0 & & 0  &  0  &  0   &  1  &  0  \\
s_1 & & 0  &  0  &  0   &  0  &  1   \\
r   & & 1 & 1  &  1   &  -1  &  -1    \\
\end{block}
\end{blockarray}
$$
}

Multiplying $M$ and $c$ by $P$ we obtain:
{\scriptsize
$$
\begin{blockarray}{ccccccccccccccc}
& & & & y_1 & y_2 & y_3  & x_1 & x_2 & x_3 & x_4 & x_5 & x_6\\
\begin{block}{cccc(ccccccccc)c(c)}
   & & t_0 & &  -1  & 0  & 0   & -1 & 0  & 0  & -1 & 0  & 0 &  & -1 \\
   & & t_1 & &  0  & -1  & 0   & 0  & -1 & 0  & 0  & -1 & 0 & & -1 \\
N=PM = & & t_2 & &  0  & 0  & -1   & 0  & 0  & -1 & 0  & 0  & -1 & \quad \quad \quad b=Pa= \phantom{a} & -1 \\
   & & s_0 & & 0  & 0   & 0  & 1  & 0  & 1  & 0  & 1  & 0  & & 1 \\
   & & s_1 & &  0  & 0 & 0  & 0  & 1  & 0  & 1  & 0  & 1  & & 1 \\  
   & & r & &  1  & 1  & 1   & 0  & 0  & 0 & 0  & 0  & 0 & & 1 \\
\end{block}
\end{blockarray}
$$}
The feasible solutions of the previous LP correspond to the feasible flows of
the network in Figure \ref{fig:2a}, where nodes $s_0,s_1$ and $r$ have a supply of one
unit of flow and nodes $y_0,y_1,y_2$ have a demand of one unit of flow. Figure
\ref{fig:2b} contains the result of transforming the network in Figure \ref{fig:2a}
to a (standard) max flow problem. In both networks all edges have capacity $1$.

\captionsetup[subfigure]{labelformat=empty}

\begin{figure}
\centering
	\begin{subfigure}[t]{1.4in}
		\centering
\tikzstyle{vertex}=[circle,fill=black!25,minimum size=12pt,inner sep=0pt]
\tikzstyle{edge} = [draw,thick,-]
\tikzstyle{diedge} = [draw,thick,->]
\tikzstyle{weight} = [font=\small]

\begin{tikzpicture}[scale=0.5,auto]

\foreach \pos/\name in {{(-2.9,-1.5)/{-1}},{(-2.9,1.5)/{-1}},{(-2.9,0)/{-1}},{(2.7,1.5)/1},{(2.7,0)/1},{(2.7,-1.5)/1}}
      \node at \pos {$\name$};
\foreach \pos/\name in {{(-2,-1.5)/r},{(-2,1.5)/{s_0}},{(-2,0)/{s_1}},{(2,1.5)/{t_0}},{(2,0)/{t_1}},{(2,-1.5)/{t_2}}}
        \node[vertex] (\name) at \pos {$\name$};
\foreach \source/ \dest  in {{s_0}/{t_0},{s_0}/{t_1},{s_0}/{t_2},{s_1}/{t_0},{s_1}/{t_1},{s_1}/{t_2},r/{t_0},r/{t_1},r/{t_2}}
      \path[diedge] (\source) -- node {} (\dest);

\end{tikzpicture}

		\caption{\centering Fig.\,{\ref{fig:2a}}:\,\,{Network with suppies/demands}}\label{fig:2a}		
	\end{subfigure}
	\quad \quad \quad \quad \quad \quad \quad
	\begin{subfigure}[t]{1.6in}
		\centering
\tikzstyle{vertex}=[circle,fill=black!25,minimum size=12pt,inner sep=0pt]
\tikzstyle{edge} = [draw,thick,-]
\tikzstyle{diedge} = [draw,thick,->]
\tikzstyle{weight} = [font=\small]
\begin{tikzpicture}[scale=0.5,auto]
\foreach \pos/\name in {{(-2,-1.5)/r},{(-2,1.5)/{s_0}},{(-2,0)/{s_1}},{(2,1.5)/{t_0}},{(2,0)/{t_1}},{(2,-1.5)/{t_2}},{(-4,0)/s},{(4,0)/t}}
        \node[vertex] (\name) at \pos {$\name$};
\foreach \source/ \dest  in {{s_0}/{t_0},{s_0}/{t_1},{s_0}/{t_2},{s_1}/{t_0},{s_1}/{t_1},{s_1}/{t_2},r/{t_0},r/{t_1},r/{t_2},s/{s_0},s/{s_1},s/r,{t_0}/t,{t_1}/t,{t_2}/t}
      \path[diedge] (\source) -- node {} (\dest);

\end{tikzpicture}
		\caption{\centering Fig.\,{\ref{fig:2b}}:\,\,{Standard flow network obtained from (2a)}}\label{fig:2b}
	\end{subfigure}
\end{figure}

\end{example}

It follows from this construction that for every $1\leq i\leq n$ and every $d=\{0,1\}$, there is a solution $s$
of $I$ with $s(v_i)=d$ if and only if there is a {\em saturating} flow (that is, a flow where all the edges leaving $s$ or entering $t$ are at full capacity)
such that the edge associated to $x_i$ carries $d$ units of flow. R\'egin \cite{Regin94,Regin96} has shown that this later condition can be tested 
{\em simultaneously} for
all $1\leq i\leq n$ and $d\in\{0,1\}$ by finding a maximal flow and computing the strongly connected components of its residual graph. 
Finding a maximal flow of a network with integral capacities can be done in time $O(\min(v^{2/3},e^{1/2})e\log(v^2/e)\log u)$ 
using Goldberg and Rao's algorithm \cite{GoldbergR98}
where $v$ is the number of vertices, $e$ is the number of edges, and $u$ is the maximum capacity of an edge. Computing the strongly connected components of the residual graph takes 
$O(v+e)$ time using Tarjan's algorithm \cite{Tarjan72}. By construction, the network derived by our algorithm satisfies $e\leq n+2m$ and $v\leq m+1$.
Furthermore, it is not difficult to
see that $u\leq mn$. Indeed, note that the capacity of any edge is either some entry, $c_i$, of vector $c$ or the absolute value of some entry, $b_i$, of vector $b$. 
It follows directly from the definition of $c$, that all its entries are at most $n$. As for $b$, the claim follows from the
fact that $b=Pa$ where, by construction, $a$ has $m$ entries where every entry is in the range $\{1,\dots,n\}$ and every entry in $P$ is in $\{-1,0,1\}$.


Hence, if we define $f(n,m)$ to be $\operatorname{min}(m^{2/3},(n+m)^{1/2})(n+m)\log(m^2/(n+m))\log mn)$ we have:
\begin{lemma}
\label{le:boolean}
There is a domain filtering algorithm for conjunctions of boolean among constraints whose associated hypergraph is a network hypergraph, which runs in time 
$O(f(n,m))$ where $n$ is the number of variables and $m$ is the number of constraints, assuming the instance is presented as a network flow problem.
\end{lemma}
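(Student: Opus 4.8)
The proof is essentially a matter of assembling the construction developed above and verifying a chain of equivalences, so the plan is to make each link explicit and then do the complexity bookkeeping. First I would recall that, after the normalization to $D=\{0,1\}$, $R=\{1\}$, and $L(v)=\{0,1\}$, the integer solutions $z$ of the system $Mz=a$, $0\le z\le c$ are in bijection with the solutions of $I$: the coordinate $x_i$ records the value assigned to $v_i$, and the slack variable $y_j$ absorbs $\max_j-|\{v_i\in S_j\mid s(v_i)=1\}|$, which lies in $\{0,\dots,\max_j-\min_j\}$ precisely when the $j$th among constraint is satisfied.

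Next, using that $H$ is defined by a tree $T$, I would invoke the two preceding lemmas: since $P$ has full rank, the system $Nz=b$ with $N=PM$ and $b=Pa$ has the same solution set as $Mz=a$; and since each column of $N$ carries exactly one $+1$ and one $-1$, $N$ is the incidence matrix of a digraph $G$ on the $m+1$ nodes $u_1,\dots,u_{m+1}$ with one edge $e_k$ per variable $z_k$. Reading $z_k$ as the flow on $e_k$, the equalities $Nz=b$ become flow-conservation constraints for the supply/demand vector $b$ and $0\le z\le c$ become capacity constraints; the source/sink augmentation then puts feasible supply/demand flows in correspondence with saturating flows of the augmented network. Because $N$ is a network matrix, hence totally unimodular, the associated polytope is integral, so a \emph{fractional} saturating flow exists if and only if an integral one does.

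The key correctness step is the equivalence stated in the text: a value $d\in\{0,1\}$ is a support for $v_i$ if and only if some saturating integral flow sends $d$ units along the edge $e_k$ of $x_i$. Rather than test this once per pair $(i,d)$ --- which would cost $O(n)$ separate max-flow computations --- I would apply R\'egin's theorem \cite{Regin94,Regin96}: fix a single maximal flow $\varphi$, form its residual graph, and compute the strongly connected components. For the edge $e_k=(u,u')$, the carried value $\varphi(e_k)$ can be changed within a saturating flow exactly when $u$ and $u'$ lie in a common residual SCC, and this one computation decides all pairs $(i,d)$ simultaneously. This is the part I expect to require the most care, since it rests on identifying alternative saturating flows with augmenting cycles in the residual graph and on the integrality granted by the network structure.

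Finally I would do the complexity accounting. As noted in the construction, the augmented network has $v\le m+1$ nodes and $e\le n+2m$ edges, and every capacity is either an entry $c_i\le n$ of $c$ or some $|b_i|$; since $b=Pa$ with $a$ having entries in $\{1,\dots,n\}$ and $P$ entries in $\{-1,0,1\}$, we obtain $u\le mn$. A maximal flow is computed once by Goldberg and Rao's algorithm \cite{GoldbergR98} in $O(\min(v^{2/3},e^{1/2})e\log(v^2/e)\log u)$ time, and the residual SCCs by Tarjan's algorithm \cite{Tarjan72} in $O(v+e)$ time, which is dominated by the former. Substituting the bounds on $v$, $e$, and $u$ yields $O(f(n,m))$, as claimed.
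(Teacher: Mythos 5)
Your proposal is correct and follows essentially the same route as the paper, whose ``proof'' of this lemma is precisely the preceding construction: the integer-program encoding, multiplication by the full-rank incidence matrix $P$ to obtain a network matrix $N=PM$, the reduction to a saturating-flow problem, R\'egin's residual-SCC test for all supports at once, and the same bounds $e\le n+2m$, $v\le m+1$, $u\le mn$ plugged into Goldberg--Rao and Tarjan. No gaps to report.
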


It is customary, when analizying the time complexity of domain filtering to report, additionally, the so-called time complexity 'down a branch of a search tree' which consists in the aggregate time complexity of successive calls to the algorithm, when at each new call, the list of some of the variables has
been decreased (as in the execution of a propagation-search based solver). It was observed again by R\'egin \cite{Regin96} that, in this setting, it is not necessary, to solve the flow problem from scratch at each call, leading to a considerable redution in total time. Applying the scheme in \cite{Regin96}, we obtain
that the time complexity down a branch of a search tree of our algorithm is $O(n(n+m))$. We ommit the details because they are faily standard (see \cite{Regin96}).

There are minor variants (leading to the same asymptotic complexity) obtained by modifying the treatment of the slack variables. Here we will discuss
two of them. In the first variant, used in \cite{Maher08}, one encodes a constraint $C_j=(S_j,\{1\},\min_j,\max_j)$ with two equations $y_j+\sum_{v_i\in S_j} x_i=\max_j$, and $-z_j+\sum_{v_i\in S_j} x_i=\min_j$ where $y_j$ and $z_j$ are new slack variables satisfying $0\leq y_j,z_j$. This encoding produces a network
that has $m$ more nodes and edges. In a second variant, one encodes a constraint $C_j$ with the equation $-y_j+\sum_{v_i\in S_j} x_i=0$ where $y_j$ satisfies
$\min_j\leq y_j\leq\max_j$. Under this encoding, our approach produces a network problem where, instead of having nodes with specified suply or demand, 
we have edges with minimum demand. The asymptotic time bounds in all variants are identical.

If the instance is not presented as a network flow problem then one would need to add the cost of transforming the instance into it. However
this cost would be easily amortized as a domain filtering algorithm is invoked several times during the execution of a constraint solver\footnote{In fact, as shown in \cite{Regin94,Regin96}, it is only necessary to solve the max flow problem
during the first invocation so it could be argued that a more realistic bound on the running time of the algorithm is $O(n+m)$.}. 
Furthermore, in practical scenarios, the conjunction of among constraints will 
encode a global constraint from a catalog of available global constraints.  Hence, it is reasonable to assume that the formulation of the global constraint as a network flow problem can be precomputed.

This approach can be generalized to non-boolean domains via boolean encoding. The choice of boolean encoding might depend on the particular
instance at hand but for concreteness we will fix 
one. The {\em canonical booleanization} (see Example \ref{ex:toy}) of a conjunction $I=(V,D,L,\mathcal C)$
of among constraints with $|D|\geq 3$ is the boolean instance $(V\times D,$\{0,1\}$,L_b,{\mathcal C}_b)$ where 
$L_b(v,d)=\{0,1\}$ if $d\in L(v)$ and $\{0\}$ otherwise, and ${\mathcal C}_b$ contains:
\begin{itemize}
\item $\among(S\times R,\{1\},\min,\max)$ for every constraint $\among(S,R,\min,\max)\in{\mathcal C}$, and
\item $\among(\{v\}\times D,\{1\},1,1)$ for every variable $v\in V$. This family of constraints
are called {\em non-empty assignment constraints}.
\end{itemize}
That is, the intended meaning of the encoding is that $(v,d)\in V\times D$ is true whenever $v$ takes value $d$.

We define the hypergraph $H$ associated to $I$ to be the hypergraph associated
to the canonical booleanization of $I$. That is, $V(H)$ is $V\times D$ and $E(H)$ contains
hyperedge $\{(v,d) \mid v\in S,d\in R\}=S\times R$ for every constraint $\among(S,R,\min,\max)$ in ${\mathcal C}$, and
hyperedge $\{(v,d) \mid d\in D\}=\{v\}\times D$ for every variable $v\in V$.
Thus, for arbitrary domains, we have:

\begin{corollary}
\label{co:arbitrary}
There is a domain filtering algorithm for conjunctions $(V,D,L,{\mathcal C})$ of among constraints whose associated hypergraph
is a network hypergraph, which runs in time $O(f(n,m))$ where $n=\sum_{v\in V} |L(v)|$  and $m=|{\mathcal C}|+|V|$, assuming the instance is presented as a network flow problem.
\end{corollary}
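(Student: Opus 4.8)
The plan is to reduce the general problem to the boolean case already settled in Lemma \ref{le:boolean}, using the canonical booleanization as the bridge. Given $I=(V,D,L,{\mathcal C})$ whose associated hypergraph $H$ is a network hypergraph, I would form its canonical booleanization $I_b=(V\times D,\{0,1\},L_b,{\mathcal C}_b)$. By definition the hypergraph associated to $I_b$ is exactly $H$, hence a network hypergraph, and $|{\mathcal C}_b|=|{\mathcal C}|+|V|=m$. Since every pair $(v,d)$ with $d\notin L(v)$ has $L_b(v,d)=\{0\}$, the first step is to discard these forced variables (as observed at the start of Section \ref{sec:alg}), leaving a boolean instance on the $n=\sum_{v\in V}|L(v)|$ genuinely free variables. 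Deleting a node from $H$ just means dropping its associated directed path while keeping the edge assignment of the defining tree $T$, so the induced sub-hypergraph is still defined by $T$ and hence still a network hypergraph; Lemma \ref{le:boolean} then applies to the reduced instance with precisely $n$ variables and $m$ constraints.

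Correctness rests on a bijection between the solutions of $I$ and those of $I_b$. Given a solution $s:V\to D$ of $I$, set $s_b(v,d)=1$ iff $s(v)=d$. The non-empty assignment constraint $\among(\{v\}\times D,\{1\},1,1)$ holds because exactly one $d$ satisfies $s(v)=d$; each constraint $\among(S\times R,\{1\},\min,\max)$ holds because $|\{(v,d)\in S\times R: s_b(v,d)=1\}|=|\{v\in S: s(v)\in R\}|$, which lies in $\{\min,\dots,\max\}$; and $s_b$ respects $L_b$ since $s(v)\in L(v)$ forces $L_b(v,s(v))=\{0,1\}$. Conversely, any solution $s_b$ of $I_b$ has, by the non-empty assignment constraints, exactly one true pair per variable, which defines $s(v)$, and the same counting shows $s$ solves $I$. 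Under this bijection $d$ is supported for $v$ in $I$ precisely when the boolean value $1$ is supported for $(v,d)$ in $I_b$, so the non-supported values of $I$ are read off directly: $d$ is pruned from $L(v)$ exactly when $1$ is pruned from the free variable $(v,d)$.

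Combining these facts, I would run the algorithm of Lemma \ref{le:boolean} on the reduced booleanization, which by hypothesis is already presented as a network flow problem, obtaining all boolean non-supports in time $O(f(n,m))$, and translate them back to $I$ in linear time. The genuinely delicate points are only bookkeeping ones: verifying that the two reductions together yield exactly the parameters $n=\sum_{v\in V}|L(v)|$ and $m=|{\mathcal C}|+|V|$, and confirming that eliminating nodes cannot destroy the network property. Both become immediate once one observes that the sole role of the non-empty assignment constraints is to enforce that $s$ is a well-defined function, and that the defining tree $T$ of $H$ serves unchanged for every induced sub-hypergraph.
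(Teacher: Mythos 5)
Your proposal is correct and follows essentially the same route as the paper: the paper's proof is the one-line observation that the canonical booleanization has $n=\sum_{v\in V}|L(v)|$ free variables and $m=|{\mathcal C}|+|V|$ constraints, after which Lemma \ref{le:boolean} applies. Your additional verifications (the solution bijection, the preservation of the network property when forced variables are dropped) are correct elaborations of details the paper leaves implicit.
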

\begin{proof}
It just follows from observing that the canonical booleanization of instance $(V,D,L,{\mathcal C})$ has $n=\sum_{v\in V} |L(v)|$ variables and $m=|{\mathcal C}|+|V|$ constraints.
\end{proof}



\section{Some applications}
\label{sec:applications}

The aim of this section is to provide evidence that the kind of \cac{}s covered by the approach developped in Section \ref{sec:alg} are often encountered in practice. To this end, we shall revisit several families of \cac{}s for which domain filtering have been previously introduced and show how they can solved and, in some cases generalized, using our algorithm. Furthermore, we shall compare, whenever possible, the time complexity bounds of our algorithm with other state-of-the-art algorithms for the same problem. Like other flow-based algorithms, the algorithm proposed here has very good time complexity down a branch of the search tree. However, we will only consider
in our comparison the cost of calling the algorithm just once. This is due to the fact that, once the size of the network produced is under a certain threshold, 
the total cost down a branch of the search tree is dominated by the cost of the incremental updates and, hence, it cannot be used to assess the comparative
quality of different flow-based algorithms. Furthermore, we will try to compare, whenever possible, the parameteres of the obtained network flow problem 
(number of nodes, edges, capacities of the edges) instead of the actual running time since the latter is dependend on the choice of the max-flow algorithm.
Somewhat surprisingly, in many of the cases, even if we did not attempt any fine-tuning,  the network produced by the algorithm is essentially equivalent to the network produced by specific algorithms. 

\subsection{Disjoint constraints}
As a warm up we shall consider the \globalcardinality{} and \alldifferent{} constraints. We have
seen in Example \ref{ex:gcc} that both can be
formulated as a conjunction $(V,D,L,{\mathcal C})$ of among constraints of the form 
$C_d=\among(V,\{d\},\min_d,\max_d), d\in D$. Note that both have
the same associated hypergraph $H$ with node-set, $V\times D$,
and edge set $E(H)$ containing hyperedge $h_v=\{v\}\times D$ for every $v\in V$, and
hyperedge $h_d=V\times\{d\}$, for every $d\in D$.

It is not difficult to 
see (see Example \ref{ex:toy}) that $H$ is defined by the tree $T$ defined as follows:
The node-set of $T$ consists of $\{r\}\cup V\cup D$ where $r$ is a new node. The edge-set of $T$ contains an edge from every $v\in V$ to $r$ (associated 
to $h_v$) and from $r$ to every node $d\in D$ (associated to $h_d$). 
Consequently, both \globalcardinality{}
and \alldifferent{} are solvable by our algorithm. The network
for the \globalcardinality{} using the abovementioned tree $T$ has
$e=O(|V||D|)$ edges, $v=|V|+|D|+3$ nodes, and the 
maximum capacity, $u$, of an edge is at most $|D||V|$.
Hence, it follows that the total running time of our algorithm for the \globalcardinality{} constraint is $O(\min(v^{2/3},e^{1/2})e\log(v^2/e)\log u)$. 
R\'egin's algorithm \cite{Regin96} has a $O(|V|^2|D|)$ complexity which is better when $|V|\in O(|D|^{2/3})$ but the comparison between the two bounds is not very 
meaningfull because it mainly reflects a different choice max flow algorithm. Indeed, the network produced by both
algorithms are very similar. In particular, the network obtained using the second variant discussed after Lemma \ref{le:boolean} is essentially the same described
in \cite{Regin96}. The only difference is that the network obtained by our algorithm contains one extra node. 
In the particular case of the \alldifferent{} constraint, \cite{Regin94} shows
how to produce a bipartite matching problem that can be solved using specialized algorithms, such as \cite{HopcroftK73}, leading to a total time
complexity of $O(|V|^{5/2})$ which is better than ours.

\begin{example}
\label{ex:toy}
Consider the constraint $\alldifferent(s_0,s_1)$ where the list of each
variable contains the following three values: $t_0,t_1,t_2$. Then, $\alldifferent(s_0,s_1)$ is encoded as a \cac{} $I$ with the following constraints:
$\among(\{s_0,s_1\},\{t_0\},0,1)$, $\among(\{s_0,s_1\},\{t_1\},0,1)$, $\among(\{s_0,s_1\},\{t_2\},0,1)$.

The canonical booleanization of $I$ has variables $\{s_0,s_1\}\times\{t_0,t_1,t_2\}$ and
constraints $C_1=\among(\{s_0,s_1\}\times\{t_0\},\{1\},0,1)$, $C_2=\among(\{s_0,s_1\}\times\{t_1\},\{1\},0,1)$,
$C_3=\among(\{s_0,s_1\}\times\{t_2\},\{1\},0,1),C_4=\among(\{s_0\}\times\{t_0,t_1,t_2\},\{1\},1,1)$, 
and $C_5=\among(\{s_1\}\times\{t_0,t_1,t_2\},\{1\},1,1)$. Observe that this instance is, under the renaming 
$v_i\mapsto (s_{i-1 \mod 2},t_{i-1 \mod 3})$, the same instance than we have considered previously in
Example \ref{ex:I}. The network flow problem that our algorithm derives for this instance (see Example \ref{ex:P})
is almost identical to the one derived in \cite{Regin96}. Indeed, the network obtained in \cite{Regin96} does not have
node $r$ and, instead, requires that the demand of nodes $t_0,t_1,t_2$ is at most one (instead of {\em exactly} one).
\end{example}

A simple analysis reveals that the same approach can be generalized to instances $(V,D,L,{\mathcal C})$ satisfying
the following {\em disjointedness} condition: for every pair of constraints $\among(S,R,\min,\max)$ and $\among(S',R',\min',\max')$
 in ${\mathcal C}$, $(S\times R)\cap(S'\times R')=\emptyset$. The tractability of such instances was, to the best 
of our knowledge, not known before. The particular case in which $R\cap R'=\emptyset$ has been previously
shown in \cite{Regin05} using a different approach.  The proof given in \cite{Regin05} does not construct a flow problem 
nor gives run-time bounds so we ommit a comparison.

\subsection{Domains consisting of subsets}
Consider the following generalization of our setting where in a \cac{} $(V,D,L,{\mathcal C})$
every variable $v$ must be assigned to a subset of $L(v)$ (instead of a single element). 
In this case, the semantics of the among constraint need to be generalized as well. Instead, we will say 
a constraint $\among(S,R,\min,\max)$ is satisfied by a mapping $s:V\rightarrow 2^D$ if
$\min\leq\sum_{v\in S} |s(v)\cap R|\leq\max$. To avoid confusion we shall refer to this variant
of the among constraint as {\em set among} constraint.

For example, the \symgcc{} constraint \cite{Kocjan04} is precisely a conjunction $(V,D,L,{\mathcal C})$ of set among constraints
of the form $\among(V,\{d\},\min,\max)$ where $d$ is a singleton which, additionally, might contain
constraints of the form $\among(\{v\},D,\min,\max)$ restricting the size of the image of a variable $v$.

It is fairly easy to reduce a conjunction of set among constraints $I=(V,D,L,\mathcal C)$ 
to a conjunction of (ordinary) among constraints over a boolean domain. Indeed, one only needs to 
construct the instance  $(V\times D,$\{0,1\}$,L_b,{\mathcal C}_b)$ where 
$L_b(v,d)=\{0,1\}$ if $d\in L(v)$ and $\{0\}$ otherwise, and ${\mathcal C}_b$ contains 
$\among(S\times R,\{1\},\min,\max)$ for every constraint $\among(S,R,\min,\max)\in{\mathcal C}$.
Note that the instance thus constructed corresponds exactly to the result of removing the non-empty assignment constraints to the
canonical booleanization of $I$ (now regarded as a conjunction of ordinary among constraints).
It is then easy to observe that if $(V,D,L,\mathcal C)$ encodes a \symgcc{} constraint then the
resulting boolean instance has the same hypergraph, $H$, than the $\globalcardinality$ constraint and hence, $H$
is a network matrix. The algorithm in \cite{Kocjan04} follows closely that in \cite{Regin96} for
the \globalcardinality{} constraints, and, in particular, has the same time bounds. Consequently
the network flow derived by our algorithm is obtained, again, by adding one extra node with small capacities in the edges to the network 
introduced in \cite{Kocjan04}. 

\subsection{The sequence constraint}
\label{sec:sequence}
The \sequence{} constraint \cite{Beldiceanu94} corresponds to
instances $(\{v_1,\dots,v_n\},D,L,{\mathcal C})$ with constraints
$\among(\{v_i,\dots,v_{i+k}\},R,\min,\max), i=1,\dots,n-k$ for some 
fixed integers $\min,\max,k$, and fixed $R\subseteq D$. It is not difficult to see that 
the hypergraph of the canonical booleanization of the \sequence{} constraint is {\em not} a network hypergraph. However,
as shown in \cite{Maher08} one obtains an equivalent instance with a network hypergraph using a different encoding
in which for every original variable $v_i\in V$, we have a boolean variable $x_i$ which is intended to be true
whenever $v_i$ takes a value in $R$ and false otherwise. Under this alternative encoding we obtain a boolean instance $I$
which consists of constraints $\among(\{x_i,\dots,x_{i+k}\},\{1\},\min,\max), i=1,\dots,n-k$. It is shown in \cite{Maher08} that the hypergraph $H$ of the
boolean instance $I$ obtained under this encoding satisfies the so-called consecutive-ones property which implies that $H$ is
defined by a tree $T$ consisting of a single directed path. Indeed, the network flow obtained by our approach
is identical to the one derived in \cite{Maher08} if one encodes \among{} constraints using the first variant
discussed after Lemma \ref{le:boolean}. Applying Lemma \ref{le:boolean} and noting that, in the particular 
case of the \sequence{} constraint, we have $m=O(n)$ we obtain the bound $O(n^{3/2}\log^2 n)$. By inspecting
closely the proof of Lemma \ref{le:boolean} this bound can be slightly improved 
(see 
Appendix C) 
to $O(n^{3/2}\log n\log \max)$
coinciding with the bound given in \cite{Maher08}, which is not surprising since both networks are
essentially equivalent. To the best of our knowledge $O(n^{3/2}\log n\log \max)$ is the best bound among
all complete domain consisteny algorithms for the problem, jointly with the algorithm proposed in \cite{HoevePRS06} which, 
with time complexity $O(n2^k)$, offers gives better bounds when $k\ll n$.


 

\subsection{\TFO{} model}
\label{sec:tfo}
The \TFO{} model was introduced by Razgon et al. \cite{RazgonOP07} as a generalization of several common global constraints. Formally, 
a \TFO{} model is a triple $(V,F_1,F_2)$ where $V$ is a finite set of vertices and $F_1$ and $F_2$ are nonempty families of subsets of $V$
such that two sets that belong to the same family are either disjoint or contained in each other. Each set $Y$ in $F_1\cup F_2$
is associated with two non-negative integers $\min_Y,\max_Y\leq |Y|$.
A subset $X$ of $V$
is said to be {\em valid} if $\min_Y\leq X\cap Y\leq \max_Y$ for every $Y\in F_1\cup F_2$. The task is to find the largest valid subset.
Although the methods introduced in the present paper can be generalized to deal as well with optimization version we will consider
only now the feasibility problem consisting in finding a valid subset (or report that none exists). 

First, note that the existence of a valid subset in a \TFO{} model can be formulated naturally as a satisfiability problem for a 
combination of among constraints. Indeed, there is a one-to-one correspondence between the valid subsets of $(V,F_1,F_2)$ and the 
solutions of the instance $(V,\{0,1\},L,{\mathcal C})$ where  ${\mathcal C}$
contains the constraints $\among(Y,\{1\},\min_Y,\max_Y), Y\in F_1\cup F_2$, and the list $L(v)$ of every variable $v\in V$ is $\{0,1\}$. The hypergraph $H$ associated to this instance is $(V,F_1\cup F_2)$. 
It follows
directly from Lemmas \ref{le:rootedtree} and \ref{le:glue} (see Appendix)
that $H$ is a network hypergraph and hence one can use
our approach to decide the existence of a feasible solution of a \TFO{} model. It turns out that the network introduced in \cite{RazgonOP07}
is essentially equivalent to the network flow problem that would be obtained by our approach using the second variant 
described after Lemma \ref{le:boolean}. It is not meaningfull to compare the running time of our algorithm
with that of \cite{RazgonOP07} since it deals with an optimization variant.

\subsection{Conjunction of among constraints with full domain}
\label{sec:fulldomain}

Some global constraints studied in the literature correspond to conjunctions
$(V,D,L,{\mathcal C})$ of among constraints where the scope of every constraint is the full set $V$
of variables.
This class contains, of course, the \globalcardinality{} constraint
and also several others, since we do not require $R$ to be a singleton. 
For example, the \ordereddistribute{} constraint introduced by Petit and R\'egin \cite{Petit11} 
can be encoded as conjunction $(V,D,L,{\mathcal C})$ of among constraint where where the domain $D$ has some arbitrary
(but fixed) ordering $d_1,\dots,d_{|D|}$ and in every constraint $\among(S,R,\min,\max)$,
$S=V$ and $R$ is of the form $\{d_i,\dots,d_{|D|}\}$.

We shall show that the hypergraph of the conjunction of among constraints defining \ordereddistribute{}
is a network hypergraph. Indeed, with some extra work we have managed to completely characterize 
all \cac{} instances containing only constraints with full scope that have an associated network hypergraph.



\begin{theorem}
\label{the:fulldomain}
Let $I=(V,D,L,{\mathcal C})$ be a conjunction of among constraints with $|D|\geq 3$ such that the scope of
each constraint is $V$. Then, the following are equivalent:
\begin{enumerate}
\item The hypergraph of the canonical booleanization of $I$ is a network hypergraph.
\item For every pair of constraints in $\mathcal C$, their ranges are disjoint or one of them 
contained in the other.
\end{enumerate}
\end{theorem}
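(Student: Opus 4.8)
The plan is to prove the two implications separately, with the main work going into establishing that condition (2) suffices to construct an oriented tree defining the hypergraph.

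\textbf{Setup and the easier direction (1)$\Rightarrow$(2).}
First I would fix notation. Since every constraint has scope $V$, the canonical booleanization has node-set $V\times D$ and its hyperedges are, for each constraint $\among(V,R,\min,\max)$, the set $V\times R$, together with the non-empty assignment hyperedge $\{v\}\times D$ for each $v\in V$. I would try to show the contrapositive of \imply{1}{2}: suppose two ranges $R,R'$ are \emph{crossing}, meaning $R\setminus R'$, $R'\setminus R$, and $R\cap R'$ are all nonempty (this is exactly the failure of ``disjoint or nested''). Because $|D|\geq 3$ I can pick witnesses $d_1\in R\setminus R'$, $d_2\in R\cap R'$, $d_3\in R'\setminus R$. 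The natural approach is to exhibit a square submatrix of the incidence matrix with determinant $\pm 2$, using the rows indexed by suitable nodes $(v,d_1),(v,d_2),(v,d_3)$ and the columns for $V\times R$, $V\times R'$, and the assignment edge $\{v\}\times D$; crossing ranges force a $3\times 3$ (or slightly larger) odd-cycle-type pattern that is not totally unimodular, and since network hypergraphs are totally unimodular, $H$ cannot be a network hypergraph. I would verify the determinant is even by an explicit small computation.

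\textbf{The main direction (2)$\Rightarrow$(1).}
This is where the real construction lives. Assuming the ranges are pairwise disjoint or nested, they form a \emph{laminar family} on $D$, which is naturally organized as a forest under inclusion. The plan is to build the defining oriented tree $T$ directly from this laminar structure. Concretely, I would introduce a root node $r$, and for the ``assignment'' side attach one leaf $u_v$ per variable $v\in V$ with a directed edge, while for the ``range'' side I would arrange the laminar family of ranges into a rooted tree and orient its edges so that each range-hyperedge $V\times R$ is associated to a single tree-edge $e_R$, with nested ranges corresponding to edges stacked along a common root-to-leaf direction. The key is then to define, for each node $(v,d)\in V\times D$, a directed path $p_{(v,d)}$ whose edge set is exactly the hyperedges containing $(v,d)$: these are the assignment edge of $v$ and all range-edges $e_R$ with $d\in R$. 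Since the ranges containing a fixed $d$ form a chain (by laminarity), the corresponding edges $e_R$ lie consecutively on one branch, so they can be threaded into a single directed path that also uses $u_v$'s edge. I would check the ``$v\in h \iff e_h\in p_v$'' condition hyperedge-by-hyperedge.

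\textbf{Expected obstacle and how I would handle it.}
The hard part will be the orientation bookkeeping in (2)$\Rightarrow$(1): a directed path in an oriented tree must genuinely be \emph{directed} (positive polarity throughout), so I cannot freely use edges in both orientations. I expect to need the assignment edges and the range edges to be oriented in \emph{opposite} senses relative to $r$ (assignment edges pointing into $r$, range edges pointing away, mirroring the pattern already visible in Example~\ref{ex:hypergraph} and the matrix $P$ of Example~\ref{ex:P}), so that the path for $(v,d)$ runs from $u_v$ up through $r$ and then down the chain of ranges containing $d$ as a single consistently-directed path. Getting the chain of nested ranges to sit on one monotone branch, and confirming that an $R$ \emph{not} containing $d$ never intrudes on this path, is the delicate point; laminarity is precisely what guarantees this. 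I would also invoke the minimality/contraction remark from the preliminaries to keep $T$ clean, and note that the $|D|\geq 3$ hypothesis is what makes the crossing configuration available in the converse (for $|D|\leq 2$ every pair of ranges is trivially disjoint or nested, so the equivalence would be vacuous on one side).
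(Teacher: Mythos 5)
Your direction \imply{2}{1} is essentially the paper's own argument: the laminar family of ranges yields a rooted tree with all edges oriented toward the root (Lemma \ref{le:rootedtree}), the pairwise-disjoint assignment hyperedges yield a star oriented the opposite way, and the two are glued at their roots (Lemma \ref{le:glue}); your orientation bookkeeping and the observation that the ranges containing a fixed $d$ form a chain are exactly the right points to check. The gap is in \imply{1}{2}. Your plan --- exhibit a non-totally-unimodular submatrix, then use that network hypergraphs are totally unimodular --- is legitimate in principle and genuinely different from the paper (which instead proves a structural result, Lemma \ref{le:technical}, locating the range edges and assignment edges on opposite sides of a node $r$ of any defining tree, and then applies Lemma \ref{le:rootedtree} to $T^{-}(r)$). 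But the witness you propose does not work: with rows $(v,d_1),(v,d_2),(v,d_3)$ and columns $V\times R$, $V\times R'$, $\{v\}\times D$ the submatrix is
\begin{equation*}
\begin{pmatrix} 1 & 0 & 1 \\ 1 & 1 & 1 \\ 0 & 1 & 1 \end{pmatrix},
\end{equation*}
which has determinant $1$. This is no accident: restricted to the nodes $\{v\}\times D$ of a single variable, the hyperedges $R$, $R'$, $D$ form an interval (consecutive-ones) hypergraph, which is totally unimodular and in fact a network hypergraph defined by a directed path --- so no submatrix supported on one variable's rows can certify anything. Indeed, for $|V|=1$ statement (1) holds even with crossing ranges, so the theorem is only true because of the interaction between at least two variables, and any correct proof of \imply{1}{2} must use two of them (the paper's Claim 1 does so by placing two assignment edges on a directed path).

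The repair is small: take $d_1\in R\setminus R'$, $d_2\in R\cap R'$, $d_3\in R'\setminus R$ and two distinct variables $v\neq w$, and use rows $(v,d_1)$, $(v,d_3)$, $(w,d_2)$ against columns $V\times R$, $V\times R'$, $\{v\}\times D$. The resulting submatrix is
\begin{equation*}
\begin{pmatrix} 1 & 0 & 1 \\ 0 & 1 & 1 \\ 1 & 1 & 0 \end{pmatrix},
\end{equation*}
with determinant $-2$, so the incidence matrix is not totally unimodular and $H$ is not a network hypergraph. With this substitution your \imply{1}{2} becomes a short, self-contained alternative to the paper's Lemma \ref{le:technical}; what the paper's longer structural argument buys in exchange is reusability, since Lemma \ref{le:technical} is also the engine behind Theorem \ref{the:addtogcc}, where a single forbidden submatrix would not suffice to pin down the full classification.
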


In the particular case of \ordereddistribute{} constraint, the network obtained by our approach is very related
to the network introduced in section (\cite{Petit11}, Section V.A). More precisely, the abovementioned network 
is essentially equivalent to the network that would be obtained by our approach using the second variant
described after Lemma \ref{le:boolean}. However, our algorithm is far from optimal. In particular,
a complete filtering algorithm with time complexity $O(|V|+|D|)$ is given also in \cite{Petit11}.

\subsection{Adding new among constraints to a  \globalcardinality{} constraint}

Let $(V,D,L,{\mathcal C})$ be a conjunction of among constraints encoding the $\globalcardinality$ constraint
(see Example \ref{ex:gcc}) and assume that we are interested in adding several new among constraints to
it. In general, we might end up with a hard instance but depending on the shape of the new constraints 
we might perhaps still preserve
tractability. Which among constraint we might safely add? This question has been addressed by R\'egin \cite{Regin05}.
In particular, \cite{Regin05} shows that the domain filtering problem is still tractable whenever:
\begin{itemize}
\item[(a)] every new constraint added has scope $V$ and, furthermore, the ranges of every pair of new constraints are disjoint, or
\item[(b)] every new constraint added has range $D$ and furthermore, the scopes of every pair of new constraints are disjoint.
\end{itemize}

We can explore this question by inquiring which families of constraints can be added to an instance $(V,D,L,{\mathcal C})$
encoding \globalcardinality{} such that its associate hypergraph is still a network hypergraph. Somewhat surprisingly we
can solve completely this question (see Theorem \ref{the:addtogcc}). This is due to the fact that the
presence of the global cardinality constraint restricts very much the shape of the tree defining
the hypergraph of the instance. 

\begin{theorem}
\label{the:addtogcc}
Let $I=(V,D,L,{\mathcal C})$ be a conjunction of among constraints containing a global cardinality constraint with scope $V$
with $|D|\geq 3$.
Then the following are equivalent:
\begin{enumerate}
\item The hypergraph of the canonical booleanization of $I$ is a network hypergraph.
\item In every constraint in ${\mathcal C}$, the scope is a singleton or $V$, or the range is a singleton or $D$.
Furthermore, for every pair $\among(S_1,R_1,\min_1,\max_1)$, $\among(S_2,R_2,\min_2,\max_2)$ of constraints 
in ${\mathcal C}$ the following two conditions hold:
\begin{enumerate}
\item If $S_1=S_2=V$ or $S_1=S_2=\{v\}$ for some $v\in V$ then $R_1$ and $R_2$ are disjoint or one of them is contained in the other. 
\item If $R_1=R_2=D$ or $R_1=R_2=\{d\}$ for some $d\in D$ then $S_1$ and $S_2$ are disjoint or one of them is contained in the other.
\end{enumerate}
\end{enumerate}
\end{theorem}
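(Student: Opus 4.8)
The plan is to prove both directions by reducing, as far as possible, to Theorem~\ref{the:fulldomain} and to the laminar-family tree constructions of Lemmas~\ref{le:rootedtree} and~\ref{le:glue}. I view the node set $V\times D$ of the canonical booleanization as a grid with rows indexed by $V$ and columns by $D$. Since $I$ contains a \globalcardinality{} constraint with scope $V$, and the booleanization adds a non-empty assignment constraint for every variable, the hypergraph $H$ contains every full column $h_d=V\times\{d\}$ and every full row $h_v=\{v\}\times D$; every remaining constraint contributes a rectangle $S\times R$. I will use throughout that network hypergraphs are closed under induced sub-hypergraphs: restricting the node set to $U$ and each hyperedge $h$ to $h\cap U$ is witnessed by the same defining tree and the paths $p_u$, $u\in U$.

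For \imply{1}{2} I would first establish the ``shape'' part. Suppose some constraint has $2\le|S|\le|V|-1$ and $2\le|R|\le|D|-1$. Picking $v_1,v_2\in S$, $v_3\notin S$ and $d_1,d_2\in R$, $d_3\notin R$ and restricting to the nine nodes $\{v_1,v_2,v_3\}\times\{d_1,d_2,d_3\}$ yields the three full rows, the three full columns, and the block $\{v_1,v_2\}\times\{d_1,d_2\}$. The crux is a small lemma: \emph{this nine-node gadget is not a network hypergraph}. To see it, suppose a tree $T$ with paths $p_{ij}$ realised it, and let $E$ be the edge of the block. Removing $E$ splits $T$; the five non-block cells are all on one side and carry all six row/column edges there, so each of $p_{11},p_{12},p_{21},p_{22}$ runs inside that side up to the endpoint $a$ of $E$ through only its own row and column edge and then crosses $E$. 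Comparing $p_{11}$ with $p_{12}$, which share $e_{h_{v_1}}$ and both end at $a$, forces $e_{h_{d_1}}$ to precede $e_{h_{v_1}}$ along $p_{11}$; comparing $p_{11}$ with $p_{21}$, which share $e_{h_{d_1}}$ and both end at $a$, forces $e_{h_{v_1}}$ to precede $e_{h_{d_1}}$ --- a contradiction. By closure, every constraint of a network instance then has scope a singleton or $V$, or range a singleton or $D$. The laminarity conditions I would obtain by restriction: for $S_1=S_2=V$, dropping all rows and other constraints leaves a full-scope instance whose ranges are the singletons together with $R_1,R_2$, and Theorem~\ref{the:fulldomain} gives laminarity; for $S_1=S_2=\{v\}$ I restrict instead to the single row $\{v\}\times D\cong D$, which becomes exactly the booleanization of a full-scope instance on domain $D$; and conditions~2(b) are the mirror image, restricting to a single column $V\times\{d\}\cong V$ (with $|V|\ge3$ precisely when the scopes can cross) so that Theorem~\ref{the:fulldomain} applies with $V$ playing the role of the domain.

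For \imply{2}{1} I would build the defining tree from four laminar families: $\mathcal R$, the ranges of the full-scope constraints plus all singletons $\{d\}$ (laminar on $D$ by 2(a)); $\mathcal S$, the scopes of the full-range constraints plus all singletons $\{v\}$ (laminar on $V$ by 2(b)); and, for each $v$ and $d$, the families $\mathcal R_v$ and $\mathcal S_d$ of ranges and scopes of the single-row and single-column constraints of that row and column. Using Lemma~\ref{le:rootedtree} I realise $\mathcal S$ as a tree oriented towards a centre $r$ and $\mathcal R$ as a tree oriented away from $r$, glued at $r$ by Lemma~\ref{le:glue}; then I \emph{refine} by hanging, at the source-leaf of each $v$, the rooted tree of $\mathcal R_v$ (oriented towards $r$) and, at the sink-leaf of each $d$, the rooted tree of $\mathcal S_d$ (oriented away). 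The path $p_{(v,d)}$ enters the $\mathcal R_v$-gadget at $d$, runs inward through the sets of $\mathcal R_v$ containing $d$, then through the sets of $\mathcal S$ containing $v$ to $r$, outward through the sets of $\mathcal R$ containing $d$ to the leaf of $d$, and finally through the sets of $\mathcal S_d$ containing $v$; a direct check shows it meets exactly the edges of the hyperedges containing $(v,d)$.

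The main obstacle, I expect, is this construction rather than the necessity direction. The subtle point is that a single-row constraint $\{v\}\times R$ is a set of \emph{columns} that must nonetheless be realised on the \emph{row} (source) side of the tree, local to $v$, without disturbing the shared column structure that resolves the value $d$ near the centre --- making four laminar families coexist in one tree, and verifying that each per-cell directed path picks up precisely the intended gadget edges, is where the care lies. The other delicate spot is the order argument in the gadget lemma, where the polarities of the directed paths must be tracked exactly; beyond these two points, everything reduces cleanly to Theorem~\ref{the:fulldomain} and to closure under induced sub-hypergraphs.
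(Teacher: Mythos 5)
Your \imply{2}{1} construction is essentially the paper's (a mirrored version of its gluing of laminar-family trees via Lemmas~\ref{le:rootedtree} and~\ref{le:glue}), and your nine-node gadget for the ``shape'' part of \imply{1}{2} is a correct and rather more self-contained alternative to the paper's route through its technical subtree-decomposition lemma; the closure of network hypergraphs under induced sub-hypergraphs and under deleting hyperedges is also fine. The problem is in how you derive the laminarity conditions for the cases $S_1=S_2=\{v\}$ and $R_1=R_2=\{d\}$. Restricting to the single row $\{v\}\times D$ and invoking Theorem~\ref{the:fulldomain} does not work: the induced hypergraph on one row can be a network hypergraph even when the two ranges cross. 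Concretely, take $D=\{1,2,3\}$, $R_1=\{1,2\}$, $R_2=\{2,3\}$; the hypergraph on ground set $D$ with hyperedges $D$, $\{1\}$, $\{2\}$, $\{3\}$, $\{1,2\}$, $\{2,3\}$ is defined by the oriented caterpillar $a_1\to b$, $a_2\to b$, $b\to c$, $c\to d$, $d\to f$, $f\to g$ with $e_{\{1\}}=(a_1,b)$, $e_{\{2\}}=(a_2,b)$, $e_{\{1,2\}}=(b,c)$, $e_{D}=(c,d)$, $e_{\{2,3\}}=(d,f)$, $e_{\{3\}}=(f,g)$ and paths $p_1=a_1,b,c,d$, $p_2=a_2,b,c,d,f$, $p_3=c,d,f,g$. (This also shows that Theorem~\ref{the:fulldomain} cannot be applied to the one-variable instance you produce: its proof needs at least two rows, and the statement fails for $|V|=1$.)

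The reason laminarity on a single row is genuinely a global phenomenon is that only the presence of another row $w$ forces the three column edges $e_{V\times\{d_j\}}$ to emanate from a common hub (they all lie on the paths $p_{(w,d_j)}$, which share $e_{\{w\}\times D}$), and it is this hub that forces the row-$v$ paths $p_{(v,d_1)},p_{(v,d_2)},p_{(v,d_3)}$ to all terminate at the same vertex, after which a common-suffix argument (exactly like the one in your gadget lemma) kills crossing ranges: the suffix shared with $p_{(v,d_1)}$ would have to contain $e_{\{v\}\times R_1}$ and everything after it, forcing $e_{\{v\}\times R_2}$ onto $p_{(v,d_1)}$, and symmetrically. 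So the fix is to restrict to $\{v,w\}\times D$ for some $w\neq v$ (keeping both rows, the three relevant columns, and the two crossing single-row hyperedges) and prove that \emph{this} configuration is not a network hypergraph, or to follow the paper, which extracts the hub once and for all (a node $r$ with all column edges in $T^{-}(r)$ and all row edges in $T^{+}(r)$) and then reads off all four laminarity conditions from the resulting decomposition of $T$ into the subtrees hanging off the column tails and row heads. As written, your reduction for these two cases proves nothing, and this is precisely where the force of the hypothesis that $I$ contains a \globalcardinality{} constraint with scope $V$ enters; a proof that never uses that hypothesis for these cases cannot be correct.
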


Note that the previous theorem covers cases (a) and (b) from \cite{Regin05} described at the beginning of this section. The network
produced in \cite{Regin05} for the case (a) is essentially equivalent to the one derived by our approach using the second variant described
after Lemma \ref{le:boolean}. For the case (b) \cite{Regin05} does not construct a flow problem nor gives run-time bounds so
we ommit a comparison.

\section*{Acknowledgments} 
The author would like to thank the anonimous referees for many useful comments. This work was supported 
by the MEIC under grant TIN2016-76573-C2-1-P and the MECD under grant PRX16/00266.

\bibliographystyle{plain}
\bibliography{biblio}

\section*{Appendix A: Tractable subcases of the domain filtering problem restricting only the scope or the range in the constraints}

If ${\mathcal I}$ is a set of conjunctions of among constraints, we shall
denote by $\filter({\mathcal I})$ the restriction of the domain filtering problem to instances in ${\mathcal I}$. Our ultimate goal
would be to characterize precisely for which sets ${\mathcal I}$, $\filter({\mathcal I})$ has efficient algorithms.

The first question that we address is the following: which subcases of the problem can be explained by considering {\em only} the scopes of the
constraints? The scopes occurring in an instance can be characterized by an hypergraph. More precisely, let $I=(V,D,L,{\mathcal C})$ be a conjunction of among constraints. The {\em scope hypergraph} of $I$ is the hypergraph $H$ with node set $V(H)=V$
and that contains an hyperedge for every scope occurring in a constraint in ${\mathcal C}$. Formally, 
$$E(H)=\{ S \mid \among(S,R,\min,\max)\in {\mathcal C}\}$$

Let ${\mathcal H}$ be a (possibly infinite) set of hypergraphs. We denote by $\scope({\mathcal H)}$ the collection of instances $I$ of the
domain filtering problem whose scope hypergraph belongs to ${\mathcal H}$. Our question can be formalized in the following manner: for which sets, ${\mathcal H}$, of hypergraphs, is $\filter(\scope(\mathcal H))$ efficiently solvable? Note that if $I$ is an instance whose scope hypergraph $H$ is a subhypergraph of 
some $H'\in {\mathcal H}$ then we can construct, by adding superfluous new constraints to $I$, an equivalent new instance whose scope hypergraph is $H'$. Hence, we can 
assume that ${\mathcal H}$ is closed under taking subhypergraphs.

We shall solve completely this question assuming some mild technical assumptions. In order to state our result we need a few definitions from graph theory and
parameterized complexity. The Gaifman graph of
a hypergraph $H$, denoted $\gaifman(H)$ is the graph where the node-set is $V(H)$ and the edge-set contains all pairs $\{u,v\}$ such that
there is an hyperedge $h$ in $H$ with $\{u,v\}\subseteq h$. A {\em tree-decomposition} of a graph $G$ is a pair $(T,\beta)$ where
$T$ is an (ordinary, not oriented) tree and $\beta:V(T)\rightarrow 2^{V(G)}$ is a mapping such that the following conditions are satisfied:
\begin{enumerate}
\item For every node $v\in V(G)$, the set $\{x\in V(T) \mid v\in \beta(x) \}$ is non-empty and connected in $T$.
\item For every edge $\{u,v\}\in E(G)$, there is a node $x\in V(T)$ such that $\{u,v\}\in \beta(x)$.
\end{enumerate}
The {\em width} of a tree-decomposition $(T,\beta)$ is $\max\{|\beta(x)|-1 \mid x\in V(T)\}$ and the tree-width of $G$ is defined to be 
the minimum $w$ such that $G$ has a tree-decomposition of width $w$.

We will also need some notions and basic facts from parameterized complexity theory. A {\em parameterized problem} over some alphabet $\Sigma$
is a pair $(P,\kappa)$ consisting of a problem $P\subseteq\Sigma^*$ and a polynomial time mapping $\kappa:\Sigma^*\rightarrow\mathbb{N}$, called its
parameter. A parameterized problem $(P,\kappa)$ over $\Sigma$ is {\em fixed-parameter tractable} if there is a
computable function $f:\mathbb{N}\rightarrow\mathbb{N}$ and an algorithm that decides if a given instance $x\in\Sigma^*$ belongs to $P$ in
time $f(\kappa(x))\cdot |x|^{O(1)}$. $\fpt$ denotes the class of fixed-parameter tractable problems. Hence, 
the notion of fixed-parameter tractability relaxes the classical notion of polynomial-time solvability, by admitting running times that
are exponential in the parameter, which is expected to be small.

The analogous of NP in parameterized complexity is the class $\wone$ which is conjectured to contain {\em strictly} $\fpt$. We will omit the
definition of $\wone$ since it is not needed in our proofs and refer the reader to \cite{Flum06}.

\begin{theorem}
\label{the:hard}
Assume $\fpt\neq\wone$. For every recursively class of hypergraphs $\mathcal H$, $\filter(\scope(\mathcal H))$
is polynomial-time solvable if and only if there exists some natural number $w$ such that the tree-width of the Gaifman graph of every hypergraph in ${\mathcal H}$ is at most $w$.
\end{theorem}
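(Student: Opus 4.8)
The plan is to prove the two directions of the equivalence separately, establishing polynomial-time solvability when the tree-width is bounded and $\wone$-hardness otherwise.

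For the forward-easy direction, I would assume that there exists a natural number $w$ bounding the tree-width of the Gaifman graph of every hypergraph in $\mathcal H$. The approach is to reduce the domain filtering problem to a bounded-tree-width constraint satisfaction or homomorphism problem and invoke the classical tractability of CSP with bounded tree-width (via dynamic programming over a tree decomposition). Concretely, given an instance $I=(V,D,L,{\mathcal C})$ whose scope hypergraph $H$ has tree-width at most $w$, I would build a tree decomposition $(T,\beta)$ of $\gaifman(H)$ of width at most $w$ (computable in fixed-parameter-linear time by Bodlaender's algorithm, since $w$ is fixed). Each bag contains at most $w+1$ variables, so the number of partial assignments per bag is at most $|D|^{w+1}$, which is polynomial for fixed $w$. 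Crucially, every among constraint $\among(S,R,\min,\max)$ is \emph{symmetric}: its satisfaction depends only on the count $|\{v\in S\mid s(v)\in R\}|$. This lets me handle a constraint whose scope $S$ is spread across many bags by augmenting the dynamic-programming state with a running partial count for each active constraint; since each bag meets at most $w+1$ vertices and counts range in $\{0,\dots,|S|\}$, the total table size stays polynomial. A standard forward/backward pass over $T$ then determines, for each variable $v$ and value $d$, whether a global satisfying assignment with $s(v)=d$ exists, solving domain filtering in polynomial time. I expect this direction to require care in formalizing the count-tracking but to be conceptually routine given the bounded-width machinery.

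For the harder direction, I would assume that no such bound $w$ exists, so $\mathcal H$ contains hypergraphs of arbitrarily large tree-width, and derive $\wone$-hardness of $\filter(\scope(\mathcal H))$ (which, under $\fpt\neq\wone$, rules out polynomial-time solvability). The natural route is to appeal to the Grohe–Marx style excluded-grid / high-tree-width machinery invoked by the cited result of F\"arnquvist and Jonsson: high tree-width of the Gaifman graphs forces, via the Excluded Grid Theorem and the fact that $\mathcal H$ is closed under subhypergraphs, the presence of large grid-like minors that can embed a $\wone$-hard problem such as partitioned clique. I would exhibit an fpt-reduction from \textsc{Partitioned Clique} (parameterized by the number of color classes $k$) to the domain filtering problem over $\scope(\mathcal H)$: encode the $k$ chosen vertices as values assigned to $k$ variables sitting on a $k\times k$ grid substructure, and use among constraints with appropriate ranges to enforce edge and adjacency conditions. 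The main obstacle, and where I would concentrate the work, is the combinatorial step that turns ``unbounded tree-width of a subhypergraph-closed class'' into ``arbitrarily large useful grid structure on which the hard instance can be laid out,'' together with verifying that domain filtering (a support-detection task) rather than mere satisfiability suffices to decode the clique; this is precisely the place where the paper defers to the precise statement and proof of the result adapted from \cite{Farnqvist07}, so I would structure the argument to match that reduction and then observe that the ``mild technical assumptions'' (recursive enumerability, closure under subhypergraphs) are exactly what make the excluded-grid argument go through.

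Throughout, I would keep the two technical hypotheses explicit: recursive enumerability of $\mathcal H$ is used to make the tractable-side algorithm uniform (we can recognize and decompose members of $\mathcal H$), while closure under subhypergraphs (justified earlier in the excerpt by padding with superfluous constraints) is what lets the hardness reduction target whatever high-tree-width member it needs without worrying about extraneous edges. The final statement then follows by combining the polynomial-time algorithm in the bounded case with the $\wone$-hardness in the unbounded case under the assumption $\fpt\neq\wone$.
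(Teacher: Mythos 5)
Your tractable direction is sound but over-engineered: since every scope is a clique in $\gaifman(H)$, and every clique of a graph is contained in some bag of any tree decomposition, each among constraint has scope of size at most $w+1$ and lives entirely inside one bag. You can therefore expand each constraint into an explicit relation of arity at most $w+1$ and invoke the classical bounded-tree-width CSP algorithm directly, which is exactly what the paper does (it cites this in one sentence); your running-count augmentation of the DP state is never needed.

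The hardness direction is where your proposal has a genuine gap. The paper does \emph{not} re-derive anything grid-based: it takes the F\"arnquivst--Jonsson result as a black box --- namely, that $\plhom({\mathcal G},\_)$ is $\wone$-hard whenever ${\mathcal G}$ is a recursively enumerable class of graphs of unbounded tree-width --- applies it to ${\mathcal G}=\{\gaifman(H)\mid H\in{\mathcal H}\}$, and then gives a short, concrete reduction from list homomorphism to domain filtering. That reduction is the actual content of the proof and is absent from your plan: given $(G,J,L)$ with $G=\gaifman(H)$, take variables $V(H)$, domain $V(H)\times V(J)$, list $\{v\}\times L(v)$ for each $v$, and for every edge $(v_1,v_2)\in E(G)$ and every pair $(a_1,a_2)$ that is \emph{not} an edge of $J$, add $\among(h,\{(v_1,a_1),(v_2,a_2)\},0,1)$ where $h$ is any hyperedge of $H$ containing $\{v_1,v_2\}$; this forbids the simultaneous assignment $v_1\mapsto a_1$, $v_2\mapsto a_2$, so solutions of $I$ correspond exactly to list homomorphisms, and a polynomial-time domain filtering algorithm decides satisfiability and hence yields an $\fpt$ algorithm for $\plhom({\mathcal G},\_)$. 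Your alternative --- embedding \textsc{Partitioned Clique} into large grid minors extracted via the Excluded Grid Theorem --- is not just unfinished ("enforce edge and adjacency conditions" on a "grid substructure" is the entire difficulty); it is also the wrong level at which to work, since the quantitative loss in grid-minor extraction is precisely what makes that route technically heavy, and it is already packaged inside the cited theorem. You correctly flag that you would ultimately defer to the cited result, but the step your write-up defers on is the one the proof actually consists of.
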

\begin{proof}
It is well know \cite{Dechter89:tree,Freuder90:complexity} that the set of all CSP instances whose scope hypergraph has tree-width at most $w$ for some fixed $w$ is solvable in polynomial time. Hence
it only remains to show the 'only if' part. This follows from a result of F\"arnquivst and Jonsson \cite{Farnqvist07}. In order to state it, we need to introduce some background.

In the list homomorphism problem (for graphs), we are given two graphs $G$, $J$ and a mapping $L:V(G)\rightarrow 2^{V(J)}$ called {\em list}. The goal is to decide whether there
exists a mapping
$h:V(G)\rightarrow V(J)$ satisfying the following two conditions:
\begin{enumerate}
\item $h(v)\in L(v)$ for every $v\in V(G)$.
\item $(h(v_1),h(v_2))\in E(J)$ for every $(v_1,v_2)\in E(G)$.
\end{enumerate}
Such a mapping is called a {\em solution} of $(G,J,L)$.  F\"arnquivst and Jonsson have shown that the parameterized version of the problem (parameterized by the
size of $V(G)$) is $\wone$-hard. Indeed, the problem is $\wone$-hard even if the input graph, $G$, is guaranteed to belong to a previously fixed ${\mathcal G}$ of graphs, 
provided ${\mathcal G}$ has unbounded tree-width. Formally, let ${\mathcal G}$ be a set of graphs and define $\plhom({\mathcal G},\_)$ to be the problem:
\begin{itemize}
\item INPUT: graphs $G$, $J$ with $G\in{\mathcal G}$, and a mapping $L:V(G)\rightarrow 2^{V(J)}$. 
\item PARAMETER: $|V(G)|$.
\item GOAL: Decide whether $(G,J,L)$ has a solution.
\end{itemize}

We are finally ready to state the theorem from \cite{Farnqvist07} that we shall use.

\begin{theorem}(Lemma 3 in \cite{Farnqvist07})
Let ${\mathcal C}$ be a recursively class of graphs that does not have bounded tree-width. Then $\plhom({\mathcal C},\_)$ is $\wone$-hard.
\end{theorem}

We note here that, in order to simplify the exposition, we have taken the liberty to adapt the statement in \cite{Farnqvist07}. We are now ready to complete our proof.
Assume, towards a contradiction, that $\mathcal H$ is a set of hypergraphs with unbounded tree-width such that $\filter(\scope(\mathcal H))$ is solvable in polynomial time. 
Let $\mathcal G=\{\gaifman(H) \mid H\in{\mathcal H}\}$. It follows directly by the assumptions 
on ${\mathcal H}$ that ${\mathcal G}$ has unbounded tree-width and is recursively enumerable.  We shall give
an FPT algorithm for $\plhom({\mathcal G},\_)$. 

The algorithm is as follows. Let $(G,J,L)$ be any instance of the list homomorphism problem with $G\in{\mathcal G}$. Enumerate the hypergraphs in ${\mathcal H}$ until 
finding an hypergraph $H\in{\mathcal H}$ whose Gaifman graph is  $G$. 
Construct the instance $I$ in $\scope({\mathcal H})$ where the set of variables is $V(H)(=V(G))$, the domain is $V(H)\times V(J)$,
the list of every node $v\in V(H)$ is $\{v\}\times L(v)$, and the constraints are defined as follows:
\begin{itemize}
\item For every $(v_1,v_2)\in E(G)$, and for every $(a_1,a_2)\not\in E(H)$, include the constraint
$\among(h,R,0,1)$ where $h$ is any hyperedge in $H$ containing $\{v_1,v_2\}$ and
$R=\{(v_1,a_1),(v_2,a_2)\}$.
\end{itemize}
Note that, by construction, the scope hypergraph of $I$ is a subhypergraph of $H$ and, hence, $I$ is an instance of $\scope({\mathcal H})$. Since $\filter(\scope(\mathcal H))$
is, by assumption, solvable in polynomial time then one can also decide the satisfiability of $I$ is polynomial time. Finally, return 'yes' if $I$ is satisfiable and 'no' otherwise.
Note that the time required in finding $H$ depends only on $G$ whereas the time required in constructing and solving $I$ is polynomial on the size of $(G,J,L)$. Hence, the algorithm
just defined is $\fpt$.  It is easy to see that it correctly solves $(G,J,L)$. Indeed, let $s$ be any mapping $s:V(H)\rightarrow V(H)\times V(J)$ which we can write as $s(v)=(s_1(v),s_2(v))$ with $s_1:V(H)\rightarrow V(H)$ and $s_2:V(H)\rightarrow V(J)$. It follows directly
from the construction of $I$ that $s$ is a solution of $I$ if and only if $s_1$ is the identity (formally, $s(v)=v$ for every $v\in V(H)$) and $s_2$ is a solution of instance 
$(G,J,L)$. Hence, $I$ is satisfiable if and only if so is $(G,J,L)$. 

\end{proof}

It is not difficult to show, following \cite{Grohe07}, that the condition $\fpt\neq\wone$ cannot
be weakened (for example by requiring, instead, only P$\neq$NP). Indeed, if $\fpt=\wone$ then there exists
a family $\mathcal H$ of hypergraphs of unbounded tree-width such that $\filter({\mathcal H})$ is polynomial-time solvable.

Finally, note that the hardness part of Theorem \ref{the:hard} holds even for conjunctions of among constraint whose range has cardinality $2$. 
Note, that the hardness direction does not hold any more if one requires that the among constraints have range of cardinality $1$. Indeed, the \alldifferent{}
constraint is encoded by a conjunction of among constrains whose scope hypergraph can have arbitrary large tree-width and the range of every constraint is a singleton.

Secondly, we turn our attention to the range and investigate which restrictions on the range of among constraints guarantee that the domain filtering problem is solvable in polynomial time. To this end we can define the {\em range hypergraph} in a similar way to the scope hypergraph and define, for every
set ${\mathcal H}$ of hypergraphs, $\range(\mathcal H)$ to be the set of all conjunctions of among constraints whose range hypergraph belongs to ${\mathcal H}$.
The next theorem, which is straightforward, shows that all non-trivial hypergraphs give rise to hard problems.

\begin{theorem}
Assume P$\neq$NP. For every hypergraph $H$, $\filter(\range(\mathcal H))$ is polynomial-time solvable if and only every hypergraph $H\in {\mathcal H}$
has only trivial hyperedges (that is, if for every $H\in {\mathcal H}$ and every $h\in E(H)$, $h=\emptyset$ or $h=V(H)$).
\end{theorem}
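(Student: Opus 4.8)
The statement is a characterization of the sets $\mathcal H$ for which $\filter(\range(\mathcal H))$ is tractable, and the plan is to prove the two implications separately (reading ``for every hypergraph $H$'' as ``for every set $\mathcal H$ of hypergraphs'').

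For the ``if'' direction I would assume every hypergraph in $\mathcal H$ has only trivial hyperedges, so that every instance $I\in\range(\mathcal H)$ uses only ranges equal to $\emptyset$ or to the full domain $D$. The key observation is that such constraints are \emph{assignment-independent}: for $R=\emptyset$ one has $|\{v\in S\mid s(v)\in R\}|=0$ for every $s$, so $\among(S,\emptyset,\min,\max)$ is satisfied exactly when $\min=0$; and for $R=D$ one has $|\{v\in S\mid s(v)\in R\}|=|S|$ for every $s$, so $\among(S,D,\min,\max)$ is satisfied exactly when $\min\le|S|\le\max$. Each such test is fixed and checkable in polynomial time. Hence $I$ is satisfiable iff all its constraints pass their test and every list is non-empty, in which case \emph{every} assignment respecting the lists is a solution; so a value $d\in L(v)$ is supported iff $I$ is satisfiable, and domain filtering is solvable in polynomial time.

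For the ``only if'' direction I would argue by contraposition: assuming P$\neq$NP, I show that whenever some $H\in\mathcal H$ has a non-trivial hyperedge $h$ (so $\emptyset\neq h\subsetneq V(H)$), the problem $\filter(\range(\mathcal H))$ is NP-hard, by reducing from the monotone \oneinthree{} problem. Set $D=V(H)$ and fix $d_1\in h$ and $d_0\in D\setminus h$, both of which exist by non-triviality. Given propositional variables $z_1,\dots,z_n$ and clauses each consisting of three of them, construct the \cac{} $I$ with variable set $\{z_1,\dots,z_n\}$, domain $D$, lists $L(z_i)=\{d_0,d_1\}$, and one constraint $\among(\{z_a,z_b,z_c\},h,1,1)$ for every clause $\{z_a,z_b,z_c\}$. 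Since each list lies in $\{d_0,d_1\}$ with $d_1\in h$ and $d_0\notin h$, the count in each constraint equals the number of its variables set to $d_1$; thus solutions of $I$ correspond precisely to truth assignments making exactly one literal true in every clause.

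The only point needing care---and the main obstacle---is guaranteeing that $I$ genuinely belongs to $\range(\mathcal H)$, i.e.\ that its range hypergraph is exactly a member of $\mathcal H$ and not merely a subhypergraph of one. I would resolve this by padding: for each hyperedge $R\in E(H)\setminus\{h\}$ I add a superfluous constraint $\among(V,R,0,|V|)$, which is satisfied by every assignment and hence leaves the solution set unchanged. After padding, the range hypergraph of $I$ has node set $D=V(H)$ and hyperedge set exactly $E(H)$, so it equals $H\in\mathcal H$. Finally, since the output of a domain filtering algorithm decides satisfiability ($I$ is satisfiable iff some value is supported), a polynomial-time algorithm for $\filter(\range(\mathcal H))$ would solve monotone \oneinthree{} in polynomial time, contradicting P$\neq$NP. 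Combining the two directions yields the stated equivalence.
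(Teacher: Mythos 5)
Your proof is correct and takes essentially the same approach as the paper: the ``if'' direction by noting that trivial ranges make every among constraint assignment-independent, and the ``only if'' direction by the same reduction from monotone \textsc{One-In-Three Sat} using a non-trivial hyperedge $h$ as the range of a $(\cdot,h,1,1)$ constraint per clause. Your padding step to make the range hypergraph of the constructed instance equal $H$ exactly is a detail the paper glosses over (it asserts the range hypergraph is $H$ while only $h$ actually occurs as a range), so that extra care is a small improvement but does not change the argument.
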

\begin{proof}
The 'if' direction is trivial. Indeed, if the range hypergraph of an instance $I$ has only trivial hyperedges it follows that every among constraint in it is either
superfluous (in the sense that it does not enforce any restriction) or unsatisfiable. For the 'only if' direction, assume that 
${\mathcal H}$ has an hypergraph $H$ with a non trivial hyperedge $h$. We define a reduction from \oneinthree{}
which is the following NP-complete \cite{Garey79} problem:
\begin{itemize}
\item INPUT: An hypergraph $J$ where all the hyperedges have cardinality $3$.
\item GOAL:  Decide whether there exists some $X\subseteq V(J)$ such that $h\cap X=1$ for every $h\in E(J)$.
\end{itemize}
The reduction is as follows: given an instance $J$, of \oneinthree{}, construct the instance $I=(V,D,L,{\mathcal C})\in\range({\mathcal H})$ 
where $V=V(J)$, $D=V(H)$, $L(v)=V(H)$ for every $v\in V$, and ${\mathcal C}$ contains the constraints $(S,h,1,1)$ for every $S\in E(J)$. It is easy 
to see that the range hypergraph of $I$ is $H$ and that $J$ is satisfiable if and only if so is $I$.
\end{proof}

\section*{Appendix B: Proofs of section \ref{sec:applications}}

\subsection*{Basic concepts and results about network hypergraphs}

If $T$ is an oriented tree and $x\in V(T)$ we denote by $T^{-}(x)$ (respectively $T^{+}(x)$) the subtree of 
$T$ containing all those nodes appear in some directed path ending (respectively, starting) at $x$.

A {\em rooted} tree is an oriented tree that is obtained from un undirected tree by fixing a node $r$, called the root,
and orienting all the edges away from the root or all the edges towards the root.

\begin{lemma}
\label{le:rootedtree}
For every hypergraph $H$ the following are equivalent:
\begin{enumerate}
\item Every pair of hyperedges in $H$ are either disjoint or contained in each other.
\item There exists a rooted tree $T$ that defines $H$ such that the path associated to every variable 
ends at the root.
\end{enumerate}
Furthermore, when condition (2) holds we can assume that in every edge $e=(x,y)$
associated to a minimal hyperedge of $H$, $x$ has in-degree zero.
\end{lemma}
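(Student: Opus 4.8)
The plan is to exploit the well-known correspondence between families that are pairwise disjoint-or-nested (\emph{laminar} families) and rooted trees, and to check that under this correspondence ``rooted tree defining $H$ with all variable paths ending at the root'' translates exactly into laminarity of the hyperedge family. So condition (1) says precisely that $E(H)$ is laminar, and I would set up the two implications around the canonical tree of a laminar family.

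For the direction \imply{1}{2}, I would build that canonical tree directly. Take as node set the hyperedges of $H$ together with one fresh node $r$, and, for each hyperedge $Y$, add the edge $(Y,Z)$ oriented towards $r$, where $Z$ is the smallest hyperedge that properly contains $Y$ if one exists and $Z=r$ otherwise; let $e_Y$ be this edge. This is a tree rooted at $r$ with all edges pointing to the root. To assign paths, first observe that, by (1), the hyperedges containing a fixed variable $v$ are pairwise non-disjoint, hence by laminarity totally ordered by inclusion, say $Y_1\subsetneq\cdots\subsetneq Y_k$; moreover any hyperedge strictly between $Y_i$ and $Y_{i+1}$ would again contain $v$, so $Y_{i+1}$ is exactly the parent of $Y_i$ and, similarly, $Y_k$ has parent $r$. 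Thus the directed path $p_v$ from $Y_1$ up to $r$ (the trivial path at $r$ if $v$ lies in no hyperedge) uses precisely the edges $e_{Y_1},\dots,e_{Y_k}$, which I would then verify gives $v\in h\iff e_h\in p_v$ for all $h$; note every $p_v$ ends at $r$, as required.

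For \imply{2}{1}, I would argue by the way each edge cuts the tree. Since all variable paths are directed and end at $r$, we may take $T$ oriented with all edges pointing towards $r$, so each $p_v$ is the unique directed path from its start node $s_v$ to $r$. Writing $e_h=(a,b)$ with $b$ nearer the root, the edge $e_h$ lies on $p_v$ iff $s_v$ is a descendant of $a$, i.e.\ $s_v\in T^{-}(a)$; hence $h=\{v\mid s_v\in T^{-}(a)\}$. For two hyperedges the associated descendant-subtrees $T^{-}(a_1),T^{-}(a_2)$ of a rooted tree are either nested or disjoint according to whether $a_1,a_2$ are comparable; transporting this through $v\mapsto s_v$ yields that $h_1,h_2$ are nested or disjoint, which is (1).

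Finally, for the ``furthermore'' clause I would simply record that the canonical tree built in \imply{1}{2} already has the stated property: if $Y$ is a minimal hyperedge then no hyperedge is properly contained in it, so $Y$ has no children in the tree, i.e.\ the tail of $e_Y=(Y,Z)$ has in-degree zero. Since (2) implies (1) by the above, one may always replace an arbitrary witnessing tree by this canonical one. I expect the only delicate point to be the laminarity bookkeeping in \imply{1}{2} --- in particular the claim that consecutive members of the chain of hyperedges through $v$ are parent and child in the tree --- together with the standard tree fact used in \imply{2}{1}; everything else is a routine check against the definition of ``defines''.
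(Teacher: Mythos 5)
Your proposal is correct and follows essentially the same route as the paper: the same canonical tree on node set $E(H)\cup\{r\}$ with each hyperedge pointing to its smallest proper superset (or to $r$), the same observation about minimal hyperedges for the ``furthermore'' clause, and for \imply{2}{1} an argument via descendant subtrees $T^{-}(a)$ that is just a repackaging of the paper's case analysis on directed paths between the tails of $e_h$ and $e_{h'}$. You in fact supply more detail than the paper does for \imply{1}{2} (the chain of hyperedges through $v$ and the parent--child verification), where the paper only asserts the check.
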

\begin{proof}
$(1\Rightarrow 2)$. Let $H$ be an hypergraph satisfying (1). Let $T$ be the oriented tree defined 
in the following way. The node-set of $T$ is $E(H)\cup\{r\}$ where $r$ is a fresh node. For every
$h\in H(T)$, $T$ contains an edge from $h$ to $h'$ where $h'=r$ if $h$ is not contained in any other
hyperedge of $H$ and $h'$ is the smallest hyperedge in $H$ containing $h$ otherwise. Note that $T$
has root $r$ and that if $e=(x,y)$ is an edge
associated to a minimal hyperedge of $H$, then $x$ has in-degree zero. It is not difficult
to verify that $T$ satisfies condition (2).
$(2\Rightarrow 1)$. Assume that $T$ is a tree satisfying condition (2). Let $h$ and $h'$ be hyperedges
in $H$ and let $e_h=(x,y)$ and $e_{h'}=(x',y')$ be their associated edges in $T$. Consider three
cases: (a) there is a directed path from $x$ to $x'$, (b) there is a directed path from $x'$ to $x$,
(c) none of the previous holds. In case (a) it follows from the fact that the path associated to
every variable ends at the root that every path containing $e_h$ contains also $e_{h'}$ and hence
$h\subseteq h'$. By the same argument, case (b) implies that $h'\subseteq h$. Finally, in case (c)
there is no directed path containing both $e_h$ and $e_{h'}$ and, consequently, $h\cap h'=\emptyset$.
\end{proof}

Oriented trees can be combined by gluing some of their nodes. Formally, let $T_1$ and $T_2$ be trees and let $r_1$ and $r_2$ be nodes in $T_1$
and $T_2$ respectively. Assume, renaming nodes if necessary, that $V(T_1)\cap V(T_2)=\emptyset$. Then, the result of gluing $r_1$ and $r_2$
is obtained by, first, computing the disjoint union of $T_1$ and $T_2$ and, then, merging $r_1$ and $r_2$ into a new node $w$ that has as in-neighbours
the union of all in-neighbours of $r_1$ and $r_2$ and, as out-neighbours, the union of all out-neighbours of $r_1$ and $r_2$. It is not difficult to
see that the result is again an oriented tree.

The following technical lemma, which follows directly from the definitions, will be useful.

\begin{lemma}
\label{le:glue}
For $i=1,2$, let $H_i$ be an hypergraph, let $T_i$ be a tree defining $H_i$ and let $r_i\in V(T_i)$. Assume that $V(H_1)=V(H_2)$ and
that for every $v\in V(H_1)$ the following holds: if $v\in h_1\cap h_2$ with $h_1\in H_1$
and $h_2\in H_2$ then the last node of the path associated to $v$ in $T_1$ is $r_1$ and the first node of the path 
associated to $v$ in $T_2$ is $r_2$.  Then, the result of gluing $r_1$ 
and $r_2$ defines $H_1\cup H_2$.
\end{lemma}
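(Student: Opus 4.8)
The plan is to exhibit explicit edge- and path-associations witnessing that the glued tree defines $H_1\cup H_2$, and then to check the defining biconditional case by case. Write $w$ for the node of $T$ obtained by merging $r_1$ and $r_2$. By the gluing construction $T$ is an oriented tree whose node set is $(V(T_1)\setminus\{r_1\})\cup(V(T_2)\setminus\{r_2\})\cup\{w\}$, and, crucially, the edge sets $E(T_1)$ and $E(T_2)$ become \emph{disjoint} subsets of $E(T)$: an edge of $T_i$ incident to $r_i$ simply becomes incident to $w$, and since $V(T_1)$ and $V(T_2)$ are otherwise disjoint, no two edges are identified. I would record this disjointness first, since it is what prevents the two trees from interfering.

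For the edge association I send each $h\in H_1$ to the edge $e_h$ already chosen in $T_1$ and each $h\in H_2$ to its edge in $T_2$ (should a hyperedge belong to both families, either choice works by the symmetry of what follows). For the path association, fix $v\in V(H_1)=V(H_2)$ and let $A_v$, $B_v$ denote the hyperedges of $H_1$, respectively $H_2$, that contain $v$, with associated directed paths $p_v^1$ in $T_1$ and $p_v^2$ in $T_2$. I define $p_v$ by cases: if $A_v$ and $B_v$ are both nonempty, set $p_v=p_v^1\cdot p_v^2$ (concatenation); if only $A_v$ (respectively only $B_v$) is nonempty, set $p_v=p_v^1$ (respectively $p_v^2$); and if both are empty, take $p_v$ to be the trivial one-node path at $w$.

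The one step that genuinely uses the hypothesis is verifying that $p_v^1\cdot p_v^2$ is a legitimate directed path of $T$ in the first case. There $v$ lies in a hyperedge of each family, so the assumption applies and $p_v^1$ ends at $r_1=w$ while $p_v^2$ starts at $r_2=w$; concatenating yields a directed walk through $w$ whose vertices, apart from the single occurrence of $w$, lie in the disjoint sets $V(T_1)\setminus\{r_1\}$ and $V(T_2)\setminus\{r_2\}$, hence a genuinely simple path, consistently oriented because the last edge of $p_v^1$ enters $w$ and the first edge of $p_v^2$ leaves it. In the remaining cases $p_v$ is inherited unchanged from $T_1$ or $T_2$ and is automatically a directed path of $T$.

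It then remains to verify $v\in h\iff e_h\in p_v$. Take $h\in H_1$, so $e_h\in E(T_1)$ (the case $h\in H_2$ is symmetric). If $v\in h$ then $A_v\neq\emptyset$, so $p_v$ contains $p_v^1$ as a subpath and $e_h\in p_v^1$ because $T_1$ defines $H_1$; hence $e_h\in p_v$. If $v\notin h$ then either $A_v=\emptyset$, in which case $p_v$ uses no edge of $T_1$ whatsoever, or $A_v\neq\emptyset$, in which case $e_h\notin p_v^1$ again because $T_1$ defines $H_1$; in both situations the disjointness of $E(T_1)$ and $E(T_2)$ yields $e_h\notin p_v$. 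This establishes the defining property, so $T$ defines $H_1\cup H_2$. I expect no serious obstacle: the argument is essentially bookkeeping, and the only substantive point is the concatenation, which is exactly what the hypothesis on $r_1$ and $r_2$ is designed to make valid.
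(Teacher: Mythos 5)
Your proof is correct and complete. The paper's own proof of Lemma~\ref{le:glue} consists of the single word ``Straightforward,'' and your argument---keeping the inherited edge associations, splitting into cases by which families of hyperedges contain $v$, and concatenating $p_v^1\cdot p_v^2$ at the glued node (which is exactly where the hypothesis on $r_1$ and $r_2$ is needed)---is precisely the definition-unwinding the paper intends.
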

\begin{proof}
Straightforward.
\end{proof}


\begin{lemma}
\label{le:helly}
Let $H$ be an network hypergraph defined by tree $T$ and let $J$ be a subhypergraph of $H$ with the property
that for every $a,b\in E(J)$ there exists some $c\in J$ with $a\cap c\neq\emptyset$ and $b\cap c\neq\emptyset$.
Then there exists an element $r\in V(T)$ such that for every $h\in E(J)$, $e_h\in E(T^{-}(r))\cup E(T^{+}(r))$
\end{lemma}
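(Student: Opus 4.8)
The plan is to recast the conclusion as a Helly-type statement about subtrees of $T$ and then invoke the Helly property of subtrees. For each hyperedge $h \in E(J)$, with associated edge $e_h = (x_h, y_h) \in E(T)$, I would consider
\[
N(h) = \{\, z \in V(T) \mid e_h \text{ lies on some directed path of } T \text{ through } z \,\}.
\]
The desired conclusion is exactly that $\bigcap_{h \in E(J)} N(h) \neq \emptyset$: indeed, a node $r$ in this intersection satisfies, for every $h$, that $e_h$ lies on a directed path through $r$, and this is equivalent to $e_h \in E(T^{-}(r)) \cup E(T^{+}(r))$ (if $e_h$ precedes $r$ on such a path then $y_h$ reaches $r$ and $e_h \in E(T^{-}(r))$; if it follows $r$ then $r$ reaches $x_h$ and $e_h \in E(T^{+}(r))$).

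First I would verify that each $N(h)$ is a connected subtree, via the identity $N(h) = T^{-}(x_h) \cup T^{+}(y_h)$: a node $z$ sees $e_h$ on a directed path through $z$ precisely when there is a directed path from $z$ to $x_h$ (so $z \in T^{-}(x_h)$) or from $y_h$ to $z$ (so $z \in T^{+}(y_h)$). Both $T^{-}(x_h)$ and $T^{+}(y_h)$ are subtrees containing $x_h$ and $y_h$ respectively, and they are linked by the edge $e_h$, whose two endpoints lie in $N(h)$; hence $N(h)$ is connected and nonempty.

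The crucial step is to prove that the subtrees $\{N(h)\}_{h \in E(J)}$ pairwise intersect, and this is where the hypothesis enters. The key observation is that if $e_h \in p_v$ for the directed path $p_v$ associated with a vertex $v$, then every node of $p_v$ belongs to $N(h)$, since $p_v$ is a single directed path passing through each of its nodes and containing $e_h$. Now, given $a, b \in E(J)$, choose $c \in E(J)$ with $a \cap c \neq \emptyset$ and $b \cap c \neq \emptyset$. Picking $v \in a \cap c$ gives $e_a, e_c \in p_v$, and picking $w \in b \cap c$ gives $e_b, e_c \in p_w$. Writing $e_c = (x_c, y_c)$, the endpoint $x_c$ lies on both $p_v$ and $p_w$ (as $e_c$ belongs to both), so the observation yields $x_c \in N(a)$ and $x_c \in N(b)$, whence $x_c \in N(a) \cap N(b)$.

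Finally, the (standard) Helly property for subtrees of a tree guarantees that a family of pairwise-intersecting subtrees has a common vertex; applied to $\{N(h)\}_{h \in E(J)}$ it produces the required $r$. I expect the pairwise-intersection step to be the main obstacle: the hypothesis only supplies, for each pair $a, b$, a single $c$ meeting both, a ``diameter at most two'' condition on the intersection graph that for arbitrary subtrees would neither force pairwise intersection nor a common vertex. What rescues the argument is that a common point can be taken to be an endpoint of $e_c$, exploiting that intersecting hyperedges correspond to edges sharing a directed path $p_v$ all of whose nodes lie in the relevant $N(\cdot)$.
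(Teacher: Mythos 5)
Your proof is correct and follows essentially the same route as the paper's: the set $N(h)$ you define coincides with the subtree $T_h = V(T^{-}(x_h))\cup V(T^{+}(y_h))$ used there, the pairwise intersection is witnessed in both arguments by an endpoint of $e_c$ lying on the paths $p_v$ and $p_w$, and both conclude via the $2$-Helly property of subtrees.
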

\begin{proof}
For every $h\in H_1\cup H_2$, let $e_h=(x,y)$ be its associated edge in $T$ and let us define $T_h$ to be
the subtree of $T$ with node-set $V(T^{-}(x))\cup V((T^+(y))$.

It follows that $V(T_a)\cap V(T_b)\neq\emptyset$ for every $a,b\in J$. Indeed, let $c$ be the hyperedge in $J$
such that $a\cap c\neq\emptyset$ and $b\cap c\neq\emptyset$. Then both endpoints of the edge $e_c$ 
associated to $c$ belong to $V(T_a)\cap V(T_{b})$. 

Next we shall use the 2-Helly property of the subtrees of a tree (see for example \cite{Bala00}).

\begin{lemma}($2$-Helly property of subtrees)
Let $T_1,\dots,T_n$ be a collection of subtrees of an (undirected) tree $T$ such that
for every $1\leq i,j\leq n$, $V(T_i)\cap V(T_j)\neq\emptyset$. It follows that 
$\bigcap_{1\leq i\leq n} V(T_i)\neq\emptyset$.
\end{lemma}

Note that the $2$-Helly property stated deals with undirected trees (instead of oriented trees). However it easily
implies that the same property holds for oriented trees. Hence, it follows 
that $\bigcap_{h\in J} V(T_h)\neq\emptyset$. To complete the proof note that
any vertex $r$ in $\bigcap_{h\in J} V(T_h)$ satisfies the conditions of the Lemma.
\end{proof}


\begin{lemma}
\label{le:technical}
Let $I=(V,D,L,{\mathcal C})$ be a conjunction of among constraints with $|D|\geq 3$, let $H$ be the 
hypergraph of its canonical booleanization, let $H_1$ be a set of at least 2 hyperedges in $E(H)$ where every
hyperedge, $S\times R$, in $H_1$ satisfies $S=V$ and let $H_2$ to be the subset of $E(H)$ containing, for every $v\in V$, 
the hyperedge $\{v\}\times D$. Then, if $H$ is a network hypergraph then there exists a tree $T$ defining $H$ and a node $r\in V(T)$
such that for every $h\in H_1$, $e_h\in E(T^{-}(r))$ and for every $h\in H_2$, $e_h\in E(T^+(r))$.
\end{lemma}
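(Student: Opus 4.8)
The plan is to reduce the statement to an application of Lemma~\ref{le:helly} and then to upgrade its conclusion, which only places each relevant edge on \emph{some} side of a central node, to the required two-sided separation. Throughout I would exploit the \emph{full-scope} property of $H_1$: since every $h=S\times R\in H_1$ has $S=V$, and we may assume $R\neq\emptyset$ (an empty range yields an empty hyperedge, which is discarded), $h$ meets every assignment hyperedge $g_v:=\{v\}\times D\in H_2$ in the nonempty set $\{v\}\times R$. I would also keep $T$ minimal, so that every edge of $T$ equals $e_{h'}$ for some $h'\in E(H)$.

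First I would apply Lemma~\ref{le:helly} to the subhypergraph $J$ of $H$ with $E(J)=H_1\cup H_2$ (a subhypergraph of a network hypergraph is again one, so $J$ is defined by $T$). Its hypothesis is immediate from the full-scope property: given $a,b\in E(J)$, if one of them, say $b=g_w$, lies in $H_2$ then $c:=g_w$ works unless $a\in H_2$ is a distinct assignment edge, in which case any $h\in H_1$ meets both $a$ and $b$; and if $a,b\in H_1$ then any single $g_v$ meets both. Lemma~\ref{le:helly} then yields a node $r\in V(T)$ with $e_h\in E(T^{-}(r))\cup E(T^{+}(r))$ for every $h\in H_1\cup H_2$.

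The core of the argument is to show that $r$, after possibly re-selecting it inside the intersection produced by Lemma~\ref{le:helly}, separates the two families: all of $H_1$ on one side, all of $H_2$ on the other. The basic tool is the elementary fact that on any directed path the edges in $E(T^{-}(r))$ precede those in $E(T^{+}(r))$, since a $T^{+}(r)$-edge followed later by a $T^{-}(r)$-edge would close a directed cycle. Writing $e_h=(a_h,b_h)$ and $e_{g_v}=(c_v,f_v)$, the position of $r$ relative to the directed segment joining a comparable pair $e_h,e_{g_v}$ (they are comparable because $h$ meets $g_v$) decides their sides: $r$ puts them on opposite sides exactly when it lies on the segment \emph{between} the two edges, and on the same side when it lies strictly above or below both. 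I would therefore recast the goal as finding $r$ lying simultaneously on all these ``between'' segments; as these segments are subtrees, a second use of the $2$-Helly property (as in Lemma~\ref{le:helly}) reduces the task to checking that they intersect pairwise, after which $H_1$ and $H_2$ fall on the two prescribed sides.

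The step I expect to be the main obstacle is ruling out a \emph{mixed orientation}, in which some $e_{g_v}$ sits above a given $e_h$ while another $e_{g_w}$ sits below it (precisely what would make the between-segments disjoint). This is where $|H_1|\ge 2$, $|D|\ge 3$, and minimality of $T$ enter. A mixed orientation forces a long directed path carrying $e_{g_v},e_h,e_{g_w}$ in this order; inspecting the paths $p_{(v,d)}$ for the at least three values $d$ (those with $d\in R$ and those with $d\notin R$) together with a second full-scope edge then forces some $e_{g}$ or $e_{h}$ onto a path $p_{(u,d)}$ that must not contain it, contradicting that $T$ defines $H$ — this is exactly the mechanism that makes overlapping (non-laminar) ranges unrealisable by a tree. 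Once uniform orientation is established, the between-segments share a common node $r$, all of $H_1$ lands on one side of $r$ and all of $H_2$ on the other, and a final reversal of the orientation of $T$ — which defines the same hypergraph and interchanges $T^{-}(r)$ with $T^{+}(r)$ — places $H_1$ into $E(T^{-}(r))$ and $H_2$ into $E(T^{+}(r))$, as required. The degenerate case $R=D$ is handled together with the rest, since such an $h$ still meets every $g_v$ and enters the same comparability analysis.
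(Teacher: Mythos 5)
Your opening coincides with the paper's: apply Lemma~\ref{le:helly} to $J=H_1\cup H_2$ (your verification of its hypothesis is fine), then upgrade the one-sided conclusion to the required two-sided separation, reversing the orientation of $T$ at the end if necessary. The gap is in the upgrade, and it sits exactly where the whole content of the lemma lives. Your stated mechanism for ruling out the ``mixed orientation'' --- that it ``forces a long directed path carrying $e_{g_v},e_h,e_{g_w}$ in this order,'' from which a contradiction is supposed to follow by inspecting the paths $p_{(v,d)}$ --- does not work as described, because that three-edge pattern is \emph{not} by itself inconsistent with $T$ defining $H$. The easy disjointness argument (take an element in the intersection of the two outer hyperedges and observe that its path must cross the middle edge) only kills the patterns in which the $H_1$-edge is \emph{outer}; when the $H_1$-edge lies between the two $H_2$-edges, every path $p_{(v,d)}$ can enter and leave the segment without violating any membership. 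This is precisely Claim~\ref{cl:twozeros} in the paper's proof: the configuration you want to contradict is the one that survives that claim, not the one it eliminates.

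The actual contradiction needs the second full-scope hyperedge and a placement analysis that your sketch never carries out. In the paper, one shows that if $H_2$-edges end up on both sides, then the edges $e_a,e_b$ of two distinct hyperedges $a=V\times A$ and $b=V\times B$ of $H_1$ are both forced onto the connecting directed path, and then, chasing the paths $p_{(v,d)}$ through an $H_2$-edge sitting on each side, one obtains $B\subseteq A$ and $A\subseteq B$, contradicting $a\neq b$. This is where $|H_1|\ge 2$ is genuinely used and where the laminarity of the ranges is extracted; asserting that ``some $e_g$ or $e_h$ is forced onto a path that must not contain it'' is not a substitute for it. A second, smaller gap: your plan to apply $2$-Helly a second time to the ``between segments'' requires checking that these segments pairwise intersect, and for pairs $(h,g_v)$, $(h',g_w)$ with $h\neq h'$ and $v\neq w$ this is not immediate --- it itself presupposes the uniform orientation you have not yet established. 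The paper sidesteps the second Helly application entirely by growing a single maximal directed path out of the Helly node, proving properties (1)--(4) about the hyperedges hanging off its two ends, and taking $r$ to be the path's endpoint; I would encourage you to adopt that device, or else to supply both the pairwise-intersection check and the full range-equality argument.
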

\begin{proof}
Assume that $H$ is a network hypergraph and let $T$ be a tree defining it. We shall use the following claim.
\begin{myclaim}
\label{cl:twozeros}
Let $x_1,e_1,x_2,\dots,e_{n-1},x_n$ be a directed path in $T$, and let
$e_i,e_j,e_k$, $i<j<k$ be different edges such two of their associated hyperedges
belong to $H_2$ and the remaining one to $H_1$. Then, the edge whose
associated hyperedge belongs to $H_1$ is $e_j$.
\end{myclaim}
\begin{proof}
Let $h_i,h_j,h_k$ be the hyperedges associated to $e_i$, $e_j$, and $e_k$ respectively and assume,
towards a contradiction, that $h_i\in H_1$ (the case $h_k\in H_1$ is symmetric).  Let $(v,d)\in h_i\cap h_k$
and let $p$ be its associated path. Clearly, $p$, contains both $e_i$ and $e_k$ and hence
it must contain also $e_j$ in contradiction with the fact that $h_j\cap h_k=\emptyset$
\end{proof}

For every directed path $p=x_1,\dots,x_n$ define $T(p)$ to be the subtree of $T$
induced by $V(T^{-}(x_1))\cup\{x_1,\dots,x_n\}\cup V(T^+(x_n))$. We define
$J^{-}$ (resp. $J^+$) to be the set containing all $h\in H_1\cup H_2$
with $e_h\in E(T^{-}(x_1))$ (resp. $e_h\in E(T^{+}(x_n))$).

The proof proceeds by showing that there exists a tree $T$ defining $H$ and a directed path $p$ in $T$
satisfying all the following properties:
\begin{enumerate}
\item $e_h\in E(T(p))$ for every $h\in H_1\cup H_2$. 
\item $J^{-}\cap H_1=\emptyset$ or $J^{-}\cap H_2=\emptyset$. Also, $J^{+}\cap H_1=\emptyset$ or $J^{+}\cap H_2=\emptyset$.
\item $J^{-}$ and $J^{+}$ are non empty.
\item $J^{+}=H_2$.
\end{enumerate}
Note that if $T$ and $p$ satisfy $(1)-(4)$ then $T$ and $r=x_n$ satisfy the Lemma.

Let us show the existence of $T$ and $p$ satisfying the above-mentioned properties in increasing order:

$(1)$. First, note that $J=H_1\cup H_2$ satisfies the hypothesis of 
Lemma \ref{le:helly}  and, hence, it follows that there exists an element $r$ in $T$ such that for 
every $h\in H_1\cup H_2$, $e_h$ belongs to $T^{-}(r)$ or $T^{+}(r)$. Hence, the path $p$ consisting of single node $r$ satisfies $(1)$.

$(1)\rightarrow (2)$. We claim that every path
$p=x_1,\dots,x_n$ of maximal length satisfying $(1)$ must satisfy $(2)$ as well. Assume, towards a contradiction, that $a_i\in J^{-}\cap H_i$ for $i=1,2$
(the proof for $J^+$ is analogous).
For every $h\in J^{-}$ let $q_h$ be a directed path in $T$ containing 
$e_h$ and ending at $x_1$. It follows that the last edge in $q_{a_1}$ and $q_{a_2}$ must be
identical since otherwise there could not be a directed path containing both $e_{a_1}$ and $e_{a_2}$, which would imply
that $a_1\cap a_2=\emptyset$. Let $(y,x_1)$ 
be the common last edge of $q_{a}$ and $q_{b}$. Then,
applying the same argument it follows that $(y,x_1)$ is also the last edge of $q_h$ for every $h\in J^{-}$.
Hence, by adding $y$ at the beginning of $y,x_1,\dots,x_n$ we obtain another path $p$ satisfying $(1)$
contradicting the maximality of $p$. 

$(1,2)\rightarrow(3)$. It is easy to see that every path $p$ satisfying $(1)$ and $(2)$ has a subpath that
satisfies, additionally, $(3)$.

$(1,2,3)\rightarrow(4)$. It follows from Claim \ref{cl:twozeros} that $J^{-}\cap H_2\neq\emptyset$ or 
$J^{+}\cap H_2\neq\emptyset$. We can assume, by reversing the direction of the edges in $T$ if necessary,
that $J^{+}\cap H_2\neq\emptyset$. Since, by (2), $J^+\cap H_1=\emptyset$, in order to show $(4)$ it is only necessary to 
prove that $H_2\subseteq J^+$. Assume towards a contradiction that there exists $h\in H_2\setminus J^+$.
It follows again by Claim \ref{cl:twozeros} and the fact that $H_1$ is nonempty that 
$h\in J^-$ and hence that $H_1\cap J^-=\emptyset$. 
Let $a=V\times A$, $b=V\times B$ be two different hyperedges in $H_1$. It follows 
that $e_a$ and $e_b$ appear in path $p$ and we can assume wlog. that $e_a$ appears before $e_b$ in $p$.
We shall prove that $B\subseteq A$.
 Let $d\in B$ and let $v\in V$ be such that $h=\{v\}\times D$. Consider the directed path $q$ associated to $(v,d)$. Clearly $q$ contains $e_h$
and $e_b$ and hence $q$ contains $e_a$ as well. Hence $(v,d)$ belongs to $(V,A)$, and hence, $d\in A$.
A similar reasoning, using now any arbitrary hyperedge in $J^{+}$ shows that $A\subseteq B$. Hence we 
have $A=B$, a contradiction. This completes the proof of (4).

\end{proof}

\subsection*{Proof of Theorem \ref{the:fulldomain}}
Let $H$ be the hypergraph of the canonical booleanization of $I$. 
Define $H_1$ to
be the set of all hyperedges $S\times R$ in $H$ with $S=V$.  Define $H_2$ to be 
the hypergraph associated to the non-empty assignment constraints. That is, $H_2$ contains for every $v\in V$, 
the hyperedge $\{v\}\times D$. Note that $h_1\cap h_2\neq\emptyset$ for every $h_1\in H_1$ and $h_2\in H_2$.

$(1)\Rightarrow (2)$. 
In what follows we shall assume that $|H_1|>1$ since otherwise
(2) follows directly. Assume that $H$ is a network hypergraph. It follows that $H_1$ and $H_2$ 
satisfy the hypothesis of Lemma \ref{le:technical}. Then, let $T$ the tree defining $H$ and 
$r$ the node in $T$ given by Lemma \ref{le:technical}.
Note that for every node occurring in any hyperedge in $H_1$, 
its associated path $p$ must contain some edge in $T^+(r)$ and, hence, must necessarily 
contain $r$ as well. Then, condition (2) follows by applying Lemma \ref{le:rootedtree} to $T^{-}(r)$.

$(2)\Rightarrow(1)$. We note that this is a particular case of direction $(2)\Rightarrow(1)$ in
Theorem \ref{the:addtogcc}. Still we include a proof since we think it might help the reader 
to understand the basic idea before embarking in the more complicated proof in \ref{the:addtogcc}.
This direction follows easily from Lemma \ref{le:rootedtree} and Lemma \ref{le:glue}. Assume that (2)
holds. Then it follows from Lemma \ref{le:rootedtree} that there exists a rooted tree $T_1$ defining $H_1$ such that, additionally,
the path associated to every variable in $V(H)$ ends at the root, $r_1$, of $T_1$. Also, note that
every two hyperedges in $H_2$ have an empty intersection which implies again by Lemma \ref{le:rootedtree}
that there exists a rooted tree $T_2$ defining $H_2$ such that the path associated to every variable $v\in V(H)$ starts at the root, $r_2$, of $T_2$.
Then, it follows from Lemma \ref{le:glue} that by gluing $r_1$ and $r_2$ in $T_1$ and $T_2$ we obtain the
tree defining $H$.

\subsection*{Proof of Theorem \ref{the:addtogcc}}

Let $H$ be the hypergraph of the canonical booleanization of $I$.

$(1)\Rightarrow (2)$. Define $H_1$ 
to be the set containing, for every $d\in D$, the hyperedge $V\times\{d\}$. Every such hyperedge is in $H$ because
we are assuming that the instance contains a global cardinality constraint. Furthermore, define $H_2$ to
be the set containing all non-empty assignment constraints. That is $H_2$ contains for every $v\in V$, 
the hyperedge $\{v\}\times D$. Notice that if $a=V\times\{d\}$ belongs to $H_1$ and
$b=\{v\}\times D$ belongs to $H_2$ then $a\cap b$ is non empty as it contains $(v,d)$. Assume
that $H$ is a network hypergraph. It follows
that $H_1$ and $H_2$ satisfy the hypothesis of Lemma \ref{le:technical} and let $T$ be the tree defining $H$
and $r$ the vertex in $V(T)$ given by Lemma \ref{le:technical}.

We claim that for every hyperedge $h\in H$, $e_h$ belongs to $E(T^{-}(r))\cup E(T^{-}(r))$
Indeed, if it is not the case
then there exists some hyperedge $h=S\times R$ such that its associated edge $e_h$ has one endpoint $y$ in $V(T^{-}(r))\cup V(T^{+}(r))$
and the other, $x$, outside. Assume that $y\in V(T^{-}(r))$ (the case $y\in V(T^{+}(r))$ is symmetric). It follows
that $e_h=(y,x)$ since otherwise $e_h$ would be included in $E(T^{-}(r))$. Let $(v,d)\in S\times R$ and let $p$ be the direct path in $T$
associated to $(v,d)$. This path must include $e_h$ and also the edge associated to $V\times\{d\}$, but this is impossible
since both edges must appear with different polarity because the edge associated to $V\times\{d\}$ belongs to $V(T^{+}(r))$.

For every $d\in D$, let $(x_d,y_d)$ be the edge in $T$ associated to hyperedge $V\times \{d\}$ and let $T_d$ be
$T^{-}(x_d)$. Also,
define $T_1$ to be the subtree of $T$ that contains all the nodes $x$ that belong to a path starting at $x_d$ for
some $d\in D$ and ending at $r$. 

For every $v\in V$, let $(x_v,y_v)$ be the edge in $T$ associated to hyperedge $\{v\}\times D$ and let $T_v$ be $T^{+}(y_v)$. Also,
define $T_2$ to be the subtree of $T$ that contains all the nodes $x$ that belong to a path starting at $r$ and
ending at $y_v$ for some $v\in D$.

Let $(v,d)\in V\times D$ and let $p$ be the directed path in $T$ associated to $(v,d)$. Clearly $p$ must 
contain the edges associated to $V\times\{d\}$ and $\{v\}\times D$ and, hence, also $r$.
It also follows that for every hyperedge, $h\in E(H)$, containing $(v,d)$, $e_h$ belongs
to $T_i$ with $i\in \{d,v,1,2\}$. It follows that every edge $e\in E(T)$ is contained in some
of the trees $T_i, i\in D\cup V\cup\{1,2\}$ we have just defined and hence 
we can infer the shape of $H$ by considering separately the shape of the
hypergraphs defined by each one of the trees. Then $(1)\Rightarrow(2)$ follows from the following lemma.
\begin{lemma}
Let $e$ (resp. $e'$) be an edge in $T$, let $h=S\times R$
(resp. $h'=S'\times R'$) be the hyperedge in $H$ associated to $e$ (resp. $e'$). Then the following holds:
\begin{enumerate}
\item If there exists $d\in D$ such that $e,e'\in E(T_d)$ then $R=R'=\{d\}$. Furthermore, $S$ and $S'$ are disjoint
or contained one in another.
\item If $e,e'\in E(T_1)$ then $S=S'=V$. Furthermore, $R$ and $R'$ are disjoint or contained one in another.
\item If there exists $v\in V$ such that $e,e'\in E(T_v)$ then $S=S'=\{v\}$. Furthermore, $R$ and $R'$ are disjoint
or contained one in another.
\item If $e,e'\in E(T_2)$ then $R=R'=D$. Furthermore, $S$ and $S'$ are disjoint or contained one in another.
\end{enumerate}
\end{lemma}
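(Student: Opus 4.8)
The plan is to prove the four items by producing, in each case, an auxiliary rooted subtree of $T$ to which Lemma~\ref{le:rootedtree} applies (yielding the disjoint-or-nested conclusion), together with a short argument pinning down the range to be a singleton (items 1, 3) or the scope to be full (items 2, 4). First I would record the structural facts already in hand: $T^{-}(r)$ is oriented with all edges pointing towards $r$, so every node other than $r$ has a unique out-edge lying on its directed path to $r$; dually $T^{+}(r)$ points away from $r$; and, by the decomposition established just before the lemma, the directed path $p_{(v,d)}$ of a point $(v,d)$ passes successively through $x_d$, the gcc-edge $(x_d,y_d)=e_{V\times\{d\}}$, the node $r$, the assignment-edge $(x_v,y_v)=e_{\{v\}\times D}$ and $y_v$. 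Consequently the portion of $p_{(v,d)}$ inside $T_1$ is exactly the directed path from $x_d$ to $r$, which depends only on $d$, and its portion inside $T_2$ is exactly the directed path from $r$ to $y_v$, which depends only on $v$.

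For item 2, let $e\in E(T_1)$ carry $h=S\times R$ and choose any $(v^{*},d^{*})\in h$, so that $e\in p_{(v^{*},d^{*})}$ and hence $e$ lies on the $T_1$-segment $x_{d^{*}}\to r$. Since this segment is independent of the first coordinate, every $(v,d^{*})$ with $v\in V$ has the same $T_1$-segment, which contains $e$; thus $(v,d^{*})\in h$ for all $v$, forcing $S=V$. For the ranges I would read off $R=\{d : e \text{ lies on } x_d\to r\}$ and view the full-scope hyperedges of $T_1$ as a hypergraph on node set $D$: associate to each $d\in D$ its $T_1$-segment (a directed path ending at the root $r$) and to each such $h$ the edge $e$. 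This presentation satisfies the condition for $T_1$ to define that hypergraph with every variable-path ending at the root, so Lemma~\ref{le:rootedtree} makes the ranges pairwise disjoint or nested. Item 4 is the mirror image under the evident scope/range duality (exchange the two coordinates of $V\times D$ and reverse all edges of $T$, which turns $T^{-}(r)$ into $T^{+}(r)$ and gcc-edges into assignment-edges), giving $R=D$ and scopes pairwise disjoint or nested.

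For item 1, let $e\in E(T_d)=E(T^{-}(x_d))$ carry $h=S\times R$ and pick any $(v',d')\in h$, so $e\in p_{(v',d')}$. Because $e$ lies strictly below $x_d$ while $p_{(v',d')}$ must reach $r$, the path has to leave the subtree $T^{-}(x_d)$ through the unique out-edge of $x_d$ on the way to $r$, namely $(x_d,y_d)=e_{V\times\{d\}}$; hence $(v',d')\in V\times\{d\}$ and $d'=d$. As $(v',d')\in h$ was arbitrary and $h$ is non-empty, this gives $R=\{d\}$. For the scopes I would again invoke Lemma~\ref{le:rootedtree}, now on the rooted tree $T^{-}(x_d)$ (root $x_d$, edges towards the root): the scopes of the hyperedges $S\times\{d\}$ living in $T_d$ form a hypergraph on node set $V$ defined by $T^{-}(x_d)$ with every variable-path ending at $x_d$, so they are pairwise disjoint or nested. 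Item 3 follows by the same duality, yielding $S=\{v\}$ and nested ranges on the rooted tree $T^{+}(y_v)$.

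I expect the delicate points to be the two forced-edge steps and the verification that each auxiliary subtree genuinely defines its auxiliary hypergraph. Both rest on the fact that $T^{-}(r)$ and $T^{+}(r)$ are rooted trees, so a node such as $x_d$ or $y_v$ has a single incident edge on the route to (resp.\ from) $r$. The main care will be needed in the item-1 argument that a directed path reaching $r$ cannot exit $T^{-}(x_d)$ except through $(x_d,y_d)$: this is precisely where the uniqueness of out-edges in $T^{-}(r)$ is used, and the symmetric statement for $y_v$ in $T^{+}(r)$ underlies item 3.
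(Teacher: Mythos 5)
Your proposal is correct and follows essentially the same route as the paper: pin down the range (items 1, 3) or the scope (items 2, 4) by tracking how the directed path of a point $(v,d)$ is forced through $x_d$, the edge $e_{V\times\{d\}}$, the node $r$, and the edge $e_{\{v\}\times D}$, and then obtain the disjoint-or-nested conclusion by applying Lemma~\ref{le:rootedtree} to the relevant rooted subtree ($T_d$, $T_1$, $T_v$, or $T_2$), with the remaining cases handled by symmetry. The only difference is presentational: you spell out the auxiliary hypergraphs fed to Lemma~\ref{le:rootedtree} more explicitly than the paper does.
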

\begin{proof}
By symmetry we only need to prove the first two cases:
\begin{enumerate}
\item We first show that $R=\{d\}$ (the same argument shows that $R'=\{d\}$. Let $(v,d')$ be any node in $h$ and let $p$ be its associated directed path in $T$. Since
$p$ contains $r$, it must contain the edge associated to $V\times\{d\}$, which implies that $d'=d$. Hence, $R=\{d\}$.
The fact that $S$ and $S'$ are disjoint
or contained one in another follows from applying Lemma \ref{le:rootedtree} to $T_d$. Note that we use
that the path associated to every node in $V\times \{d\}$ must necessarily include $x_d$.
\item We first show that $S=V$ (the same argument shows that $S'=V$). Let $p$ be the directed path associated to any variable $(v,d)\in S\times R$.
This path contains $x_d$ and $r$ which implies that every path associated to a variable in 
$V\times\{d\}$ contains also $e_h$. It follows that $V\times\{d\}\subseteq S\times R$ and, hence, $S=V$.
 The fact
$S$ and $S'$ are disjoint
or contained one in another follows
by applying Lemma \ref{le:rootedtree} to $T_1$. Note that we use
that the associated path of every node in $V\times D$ must necessarily include $r$.
\end{enumerate}
\end{proof}
$(1)\Leftarrow(2)$. This follows easily from Lemma \ref{le:rootedtree} and Lemma \ref{le:glue}. 
For every $d$, let $H_d$ be the subhypergraph of $H$ containing
all the hyperedges $S\times R\in E(H)$ with $R=\{d\}$ and $S\neq V$. It follows from Lemma \ref{le:rootedtree} that there
is a rooted tree $T_d$ defining $H_d$ such that, additionally, all the paths associated to variables in $V(H)$
finish at the root of $T_d$, which we will denote by $r_d$. Similarly, let $H_1$ be the subhypergraph of $H$
containing all the hyperedges $S\times R\in E(H)$ with $S=V$. Again, by Lemma \ref{le:rootedtree} there
is a rooted tree $T_1$ with root, say, $r_1$ defining $H_1$. For every $d\in D$, let us denote by $(x_d,y_d)$ the edge in $T_1$ 
associated to $V\times \{d\}$. Lemma \ref{le:rootedtree} guarantees that $x_d$ has in-degree $0$ which implies
that the paths associated to every node in $V\times\{d\}$ start at $x_d$ and end at $r_1$. It follows
then from Lemma \ref{le:glue} that we can obtain a tree defining $H_1\cup \bigcup_{d\in D} H_d$ (that is,
the subgraph of $H$ containing all hyperedges $S\times R\in E(H)$ where $S=V$ or $R$ is a singleton)
by taking
the disjoint union of $T_1$ along with all the trees $T_d$ and gluing $r_d$ with $x_d$ for every $d\in D$.
The obtained tree, which we will call $T'_1$ is rooted at $r_1$ and has the additional property that every
path in $T'_1$ associated to a variable in $V(H)$ ends at the root $r_1$.
A symmetric argument shows that there exists a tree $T'_2$ with root $r_2$ that defines the subhypergraph of $H$
containing all hyperedges $S\times R\in E(H)$ where $R=D$ or $V$ is a singleton such
that, additionally, every path in $T'_1$ associated to a variable in $V(H)$ starts at the root $r_2$.
Then, it follows that by gluing $r_1$ and $r_2$ in $T'_1$ and $T'_2$ we obtain a 
tree defining $H$. This finishes the proof.

We note that direction $(1\Rightarrow 2)$ in Theorems \ref{the:fulldomain} and \ref{the:addtogcc} does not hold for boolean
domains due to the fact that the proof assumes the canonical encoding. However, the direction
$(2\Rightarrow 1)$ holds also for boolean domains.

\section*{Appendix C: Proof of the $O(n^{3/2}\log n\log \max)$ bound for the \sequence{} constraint}

Recall from Section \ref{sec:sequence} that a \sequence{} constraint is encoded as a boolean \cac{}
with constraints: $\among(\{x_i,\dots,x_{i+k}\},\{1\},\min,\max)$ where $i=1,\dots,n-k$ 
and $\min,\max,k$ are fixed integers.  Our goal is to improve the bound 
$O(n^{3/2}\log^2 n)$ given in Section \ref{sec:sequence} to $O(n^{3/2}\log n\log \max)$.To this end
we note, inspecting the proof of Lemma \ref{le:boolean}, that in the $\log mn$ factor appearing
in the function $f(n,m)$, $mn$ is has been obtained by bounding the quantity $u$ (where
$u$ is the maximum capacity of the edges in the network constructed by the algorithm)
by $O(\log n)$. We shall show that in the particular
case of the \sequence{} constraint we can obtain a better bound. In particular, 
we will see that $u\leq \max$ (and hence $\log u\leq \log\max$), which suffices to obtain
our desired bound. 

Recall the definitions of $a$, $b$, $c$, $T$,
and $P$ from the proof of Lemma \ref{le:boolean} and recall also that 
 the capacity of every edge in the network constructed by our algorithm is either an entry of vector $c$, which in the particular case of the \sequence{} constraint, is either $\max-\min$ or $1$ (and, hence, at most $\max$),  or the absolute value of an entry of vector $b=Pa$. In the particular case
of the \sequence{} constraint, all entries of $a$ are $\max$. Also, recall that the tree $T$ defining a the hypergraph
of a \sequence{} constraint is a directed path which implies that every row of $P$ has at most one $+1$, at most one $-1$, and the rest of entries are $0$. It follows that 
every entry in $b$ is in $\{-\max,0,\max\}$ and we are done.

\end{document}